\newtheorem{theorem}{Theorem}
\newtheorem{prop}{Proposition}
\newtheorem{defn}{Definition}
\theoremstyle{remark}
\newtheorem{remark}{Remark}
\colorlet{Changes@Color}{red}
\begin{document}
\title{Structured Sparse Non-negative Matrix Factorization with $\ell_{2,0}$-Norm for scRNA-seq Data Analysis}

\author{Wenwen Min, Taosheng Xu, Xiang Wan and Tsung-Hui Chang
\IEEEcompsocitemizethanks{
\IEEEcompsocthanksitem Wenwen Min is with The Chinese University of Hong Kong, Shenzhen 518172, China, University of Science and Technology of China, Hefei 230027, China, and the Shenzhen Research Institute of Big Data, Shenzhen 518172, China. E-mail: minwenwen@ustc.edu.cn.
\IEEEcompsocthanksitem Taosheng Xu is with Warshel Institute for Computational Biology, The Chinese University of Hong Kong, Shenzhen 518172, China, University of Science and Technology of China, Hefei 230027, China. E-mail: taosheng.x@gmail.com
\IEEEcompsocthanksitem Xiang Wan is with the Shenzhen Research Institute of Big Data, Shenzhen 518172, China. E-mail: wanxiang@sribd.cn.
\IEEEcompsocthanksitem Tsung-Hui Chang is with The Chinese University of Hong Kong, Shenzhen 518172, China and the Shenzhen Research Institute of Big Data, Shenzhen 518172, China. E-mail: tsunghui.chang@ieee.org.
}
\thanks{Manuscript received XX, 2021; revised XX, 2021.}}

\maketitle

\begin{abstract}
Non-negative matrix factorization (NMF) is a powerful tool for dimensionality reduction and clustering.
Unfortunately, the interpretation of the clustering results from NMF is difficult,
especially for the high-dimensional biological data without effective feature selection.
In this paper, we first introduce a row-sparse NMF with $\ell_{2,0}$-norm constraint (NMF\_$\ell_{20}$), where the basis matrix $\bm{W}$ is constrained by the $\ell_{2,0}$-norm, such that $\bm{W}$ has a row-sparsity pattern with feature selection.
It is a challenge to solve the model, because the $\ell_{2,0}$-norm is non-convex and non-smooth.
Fortunately, we prove that the $\ell_{2,0}$-norm satisfies the Kurdyka-\L{ojasiewicz} property.
Based on the finding, we present a proximal alternating linearized minimization algorithm and its monotone accelerated version to solve the NMF\_$\ell_{20}$ model.
In addition, we also present a orthogonal NMF with $\ell_{2,0}$-norm constraint (ONMF\_$\ell_{20}$) to enhance the clustering performance by using a non-negative orthogonal constraint.
We propose an efficient algorithm to solve ONMF\_$\ell_{20}$ by transforming it into a series of constrained and penalized matrix factorization problems. The results on numerical and scRNA-seq datasets demonstrate the efficiency of our methods in comparison with existing methods.
\end{abstract}
\begin{IEEEkeywords}
$\ell_{2,0}$-norm, feature selection, row sparse NMF and ONMF, non-convex optimization, scRNA-seq data clustering
\end{IEEEkeywords}

\IEEEpeerreviewmaketitle

\section{Introduction} % \IEEEPARstart{W}{ith}
\IEEEPARstart{W}{ith} the development of single cell RNA sequencing (scRNA-seq) technology, we can easily obtain biological profile data at single cell level from thousands of cells at the same time \cite{luecken2019current}. Clustering such scRNA-seq data has been becoming increasingly important for biological and medical applications \cite{kiselev2019challenges}.

Non-negative matrix factorization (NMF) and its variants have been widely used to solve some computational biological problems \cite{brunet2004metagenes,stravzar2016orthogonal,liu2017regularized,zhang2012discovery,chen2018discovery,zhang2019learning,fu2019nonnegative}.
Especially, they have achieved lots of successfully applications in scRNA-seq data clustering analysis \cite{shao2017robust,duren2018integrative,welch2019single}.
However, the NMF class algorithms expose some shortcomings when being applied to the clustering analysis of the high-dimensional biological data. 
It is well known that the high-dimensional biological data contains many noisy and redundant features which often impacts the performance of clustering algorithms.

To overcome the problems, sparse NMF methods have been proposed by adding sparseness constraints \cite{hoyer2004non,kim2007sparse,peharz2012sparse}.
At present, the proposed sparseness constraints, such as $\ell_1$ and $\ell_0$ norms, cannot identify real row-sparsity patterns of the basis matrix $\bm{W}$ in NMF, $\bm{X} \approx \bm{W}\bm{H}$ (see Figure \ref{alg-1}A and B), such that these sparse NMF methods cannot select the important features for clustering analysis.

Feature selection is a effective way that extract the informative features to improve the interpretability and performance of machine learning models \cite{li2017feature,gui2017feature}.
To enhance model interpretability, a common task is to search for some most important information features when NMF and its variants are used for high-dimensional data analysis.
Previously, $\ell_{2,1}$-norm constraint has been used in some supervised learning models to perform feature selection \cite{nie2010efficient,gui2017feature}. Moreover, $\ell_{2,0}$-norm is more desirable from the sparsity perspective, because it can select a specific number of the important information features \cite{Towards13,pang2019efficient,du2018exploiting}.

To integrate feature selection in the NMF model, we first present a row-sparse NMF with $\ell_{2,0}$-norm constraint (NMF\_$\ell_{20}$).
The basis matrix $\bm{W}$ is constrained by the $\ell_{2,0}$-norm, such that $\bm{W}$ has a row-sparsity pattern with feature selection (see Figure \ref{fig-1}C).
However, it is difficult to find an effective convergence algorithm to solve the NMF\_$\ell_{20}$ model because the $\ell_{2,0}$-norm constraint is non-convex and non-smooth.
Fortunately, we find that the $\ell_{2,0}$-norm satisfies the Kurdyka-\L{ojasiewicz} (K\L) property such that the traditional proximal gradient method can be used to solve a class of optimization problems with $\ell_{2,0}$-norm constraint.
For instance, the proximal alternating linearized minimization (PALM) algorithm has been proposed to solve a class of non-convex and non-smooth problems which satisfy K\L~property \cite{bo2014proximal}.
Based on the above point, we introduce the PALM algorithm and its variant, a monotone accelerated PALM (maPALM) algorithm, to solve the NMF\_$\ell_{2,0}$ model.
We prove that both PALM and maPALM algorithms converge to a critical point when they are used to solve the NMF\_$\ell_{20}$ model.

In addition, we also note that the orthogonal NMF (ONMF) which is a variant of NMF. 
It improves the clustering performance by adding the non-negative orthogonal constraint \cite{ding2006orthogonal,zhang2019greedy,wang2019clustering}.
Non-negative orthogonal matrix has the following two properties: (1) an orthonormal matrix forms a basis for a specific subspace, which facilitates geometric interpretation and signal reconstruction; (2) two non-negative vectors in the matrix are orthogonal if and only if their nonzero dimensions do not overlap. This may be the reason why ONMF is sometime more effective than NMF in clustering.

To integrate feature selection and non-negative orthogonal constraint in the NMF model, we also present a row-sparse ONMF with $\ell_{2,0}$-norm constraint (ONMF\_$\ell_{20}$).
We propose an efficient algorithm for ONMF\_$\ell_{20}$ model by using a penalty function method. The algorithm transforms ONMF\_$\ell_{20}$ into a series of subproblems so that the PALM and maPALM algorithms can be used to solve them. Our contributions of this paper are summarized as follows:
\begin{enumerate}
	\item We prove that the $\ell_{2,0}$-norm satisfies the K\L~property such that  a class of optimization problems with $\ell_{2,0}$-norm constraint can be solved by the PALM algorithm.
	\item An efficient algorithm (PALM) for NMF\_$\ell_{20}$, and its convergence property.
	\item An accelerated version of PALM (maPALM) for NMF\_$\ell_{20}$, and its convergence property.
	\item An efficient algorithm for ONMF\_$\ell_{20}$ by transforming it into a series of subproblems, and its convergence property.
	\item The application of our methods and the comparison with the competing methods using the simulated and scRNA-seq datasets. The results show that our methods are more effective in clustering accuracy and feature selection.
\end{enumerate}

\begin{figure}[h]
  \centering \includegraphics[width=1\linewidth]{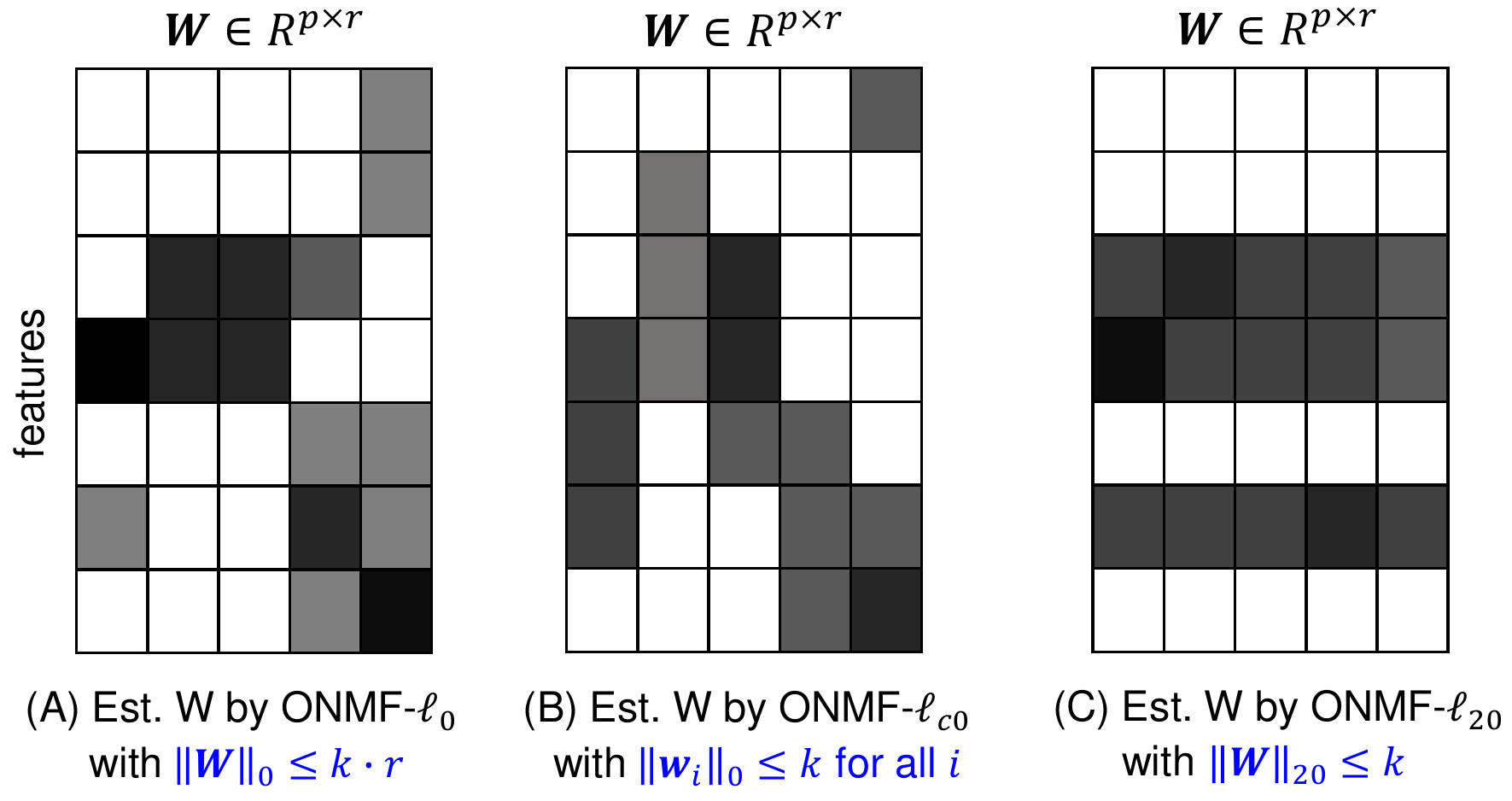}
  \caption{Illustration of these basis matrices $\bm{W}$ obtained from three structured sparse NMF models ($\bm{X} \approx \bm{W}\bm{H}$) with different constraints.
  (A) showing estimated $\bm{W}$ by NMF\_$\ell_0$,
  (B) showing estimated $\bm{W}$ by NMF\_$\ell_{20}$, and
  (C) showing estimated $\bm{W}$ by NMF\_$\ell_{c0}$, where dark boxes denote the non-zero coefficients and blank boxes represent zero coefficients. The constraint conditions of $\bm{W}$ in NMF\_$\ell_0$, NMF\_$\ell_{20}$ and NMF\_$\ell_{c0}$ are defined in Table \ref{tab-1}.
  }\label{fig-1}
\end{figure}

\begin{table}[ht]
\centering
\caption{Summary of notations.}
\begin{adjustbox}{width=1\columnwidth,center}\label{tab-1}
\begin{tabular}{l|l}
   \hline
   \textbf{Notation}  & \textbf{Meaning}\\
   \hline
   Normal font, e.g., x  & A scalar \\
   Bold lowercase, e.g., $\bm{w}$  & A vector\\
   Bold capital, e.g., $\bm{W}$    & A matrix\\
   \hline
   $\bm{X}$    & A $p$-by-$n$ matrix\\
   $\bm{W}$    & A $p$-by-$r$ matrix\\
   $\bm{H}$    & An $r$-by-$n$ matrix\\
   $\bm{I}_r$  & An $r \times r$ identity matrix\\
   \hline
   $\bm{h}^i$  &The $i$-th row of $\bm{H}$\\
   $\bm{h}_j$  &The $j$-th column of $\bm{H}$\\
   \hline
   $\|\cdot\|_1$ & $\ell_1$-norm for a vector\\
   $\|\cdot\|_0$ & $\ell_0$-norm for a vector\\
   $\|\cdot\|_2$ or $\|\cdot\|$ & $\ell_2$-norm for a vector\\
   \hline
   $\|\cdot\|_F$    & Frobenius norm for a matrix\\
   $\|\cdot\|_2$    & Spectral norm for a matrix\\
   $I()$            & Indicator function\\
   \hline
   \textbf{Model}   & \textbf{Constraint condition}\\
   \hline
   NMF\_$\ell_0$     & $\|\bm{W}\|_0 \leq k\cdot r$ (see Eq. \ref{equ-3})\\
   NMF\_$\ell_{20}$  & $\|\bm{W}\|_{2,0} \leq k$ (see Eq. \ref{equ-4})\\
   NMF\_$\ell_{c0}$  &  Column-wise sparsity of $\bm{W}$  (see Eq. \ref{equ-8})\\
   \hline
\end{tabular}
\end{adjustbox}
\end{table}

\section{Notations and definitions}
Given a matrix $\bm{W} \in \mathbb{R}^{p\times r}$, let $\bm{w}^i$ and $\bm{w}_j$ denote its $i$-th row and $j$-th column, respectively.
The Frobenius norm of $\bm{W} \in R^{p \times n}$ is defined as:
\begin{equation}\label{equ-1}
\|\bm{W}\|_F = \sqrt{\sum_{i=1}^p\sum_{j=1}^r {w_{ij}^2}} = \sqrt{tr(\bm{W}^T\bm{W})}.
\end{equation}
The spectral norm of $\bm{W}$ is the largest singular value of $\bm{W}$ and it is defined as:
\begin{equation}\label{equ-2}
\|\bm{W}\|_2 = \sigma_{\max}(\bm{W}),
\end{equation}
The $\ell_{0}$-norm of $\bm{W}$ is defined as:
\begin{equation}\label{equ-3}
\|\bm{W}\|_0 = \sum_{i=1}^p\sum_{j=1}^n{I(w_{ij} \neq 0)} = \sum_{i=1}^p \|\bm{w}^i\|_0 = \sum_{j=1}^r \|\bm{w}_j\|_0,
\end{equation}
where $I(x)=1$ if $x \neq 0$, $I(x)=0$ if $x = 0$.
The $\ell_{2,0}$-norm of $\bm{W}$ is defined as:
\begin{equation}\label{equ-4}
\|\bm{W}\|_{2,0} = \sum_{i=1}^p{I(\|\bm{w}^i\| \neq 0)} = \|(\|\bm{w}^1\|, \cdots, \|\bm{w}^p\|)\|_0,
\end{equation}
where $\| \cdot \|$ is the $\ell_2$-norm and $\| \bm{x} \| = \sum{x_i^2}$.
Briefly, $\|\bm{W}\|_{2,0}$ denotes the number of non-zero rows in $\bm{W}$.
More notations are summarized in Table \ref{tab-1} and the mathematical definitions for nonconvex optimization are summarized into the appendix \ref{appendix-a1}.
For simplicity, $\ell_{2,0}$ and $\|\bm{W}\|_{2,0}$ sometimes are abbreviated as $\ell_{20}$ and $\|\bm{W}\|_{20}$, respectively.

\section{Proposed framework}
\subsection{NMF and ONMF}
Given a data $\bm{X} \in \mathbb{R}^{p \times n}$ with $p$ features and $n$ samples, NMF model \cite{lee1999learning} can be written as follows:
\begin{equation}\label{equ-5}
\begin{aligned}
& \underset{\bm{W},\bm{H}} {\text{minimize}} && \|\bm{X} - \bm{W}\bm{H}\|_F^2\\
& \text{subject to}  && \bm{W} \in \mathbb{R}_+^{p \times r}, \bm{H} \in \mathbb{R}_+^{r \times n}.
\end{aligned}
\end{equation}
\cite{ding2006orthogonal} has reported that ONMF can improve the clustering performance by adding the non-negative orthogonal constraint on $\bm{H}$ . Thus, we introduce the following ONMF model:
\begin{equation}\label{equ-6}
\begin{aligned}
& \underset{\bm{W},\bm{H}} {\text{minimize}} && \|\bm{X} - \bm{W}\bm{H}\|_F^2\\
& \text{subject to}  && \bm{W} \in \mathbb{R}_+^{p \times r}, \bm{H} \in \mathbb{R}_+^{r \times n}, \bm{H}\bm{H}^T = \bm{I}_r.
\end{aligned}
\end{equation}
Theorem 1 in reference \cite{ding2006orthogonal} has shown that ONMF is equivalent to k-means clustering, because the nonnegative orthogonal matrix $\bm{H}$ has a good property (see remark \ref{remark-1}).
\begin{remark}\label{remark-1}
For the solution $\bm{H}$ of Eq. (\ref{equ-6}), it has at most one non-zero entry in each column, because $\bm{H}$ is non-negative and satisfies orthogonality $\bm{H}\bm{H}^T = \bm{I}_r$.
\end{remark}

\subsection{Structured sparse NMF (SSNMF)}
To integrate feature selection and non-negative orthogonal constraint in NMF model, we introduce a row-sparse ONMF with $\ell_{2,0}$-norm constraint (ONMF\_$\ell_{20}$):
\begin{equation}\label{equ-7}
\begin{aligned}
& \underset{\bm{W},\bm{H}} {\text{minimize}} && \|\bm{X} - \bm{W}\bm{H}\|_F^2\\
& \text{subject to}  && \bm{W}\in \Omega_w := \{\bm{W} \in \mathbb{R}_+^{p \times r}: \|\bm{W}\|_{2,0} \leq k\},\\
&                    && \bm{H}\in \Omega_h := \{\bm{H} \in \mathbb{R}_+^{r \times n}: \bm{H}\bm{H}^T = \bm{I}_r\},
\end{aligned}
\end{equation}
where $\|\bm{W}\|_{2,0}\leq k$ encourages $\bm{W}$ to be row sparse and select some most important features. We can also make $\bm{W}$ column sparse using the following constraint, named $\ell_{c0}$-norm, which uses $\ell_0$-norm to each column of $\bm{W}$:
\begin{equation}\label{equ-8}
	\Omega_w := \{\bm{W} \in \mathbb{R}_+^{p \times r}: \|\bm{w}_j\|_{0} \leq s, \forall j\}.
\end{equation}
The key to solve problem (\ref{equ-7}) is to remove its orthogonal constraint.
Based on the conclusion of Eq. (8) from reference \cite{wang2019clustering}, Eq. (\ref{equ-7}) is equivalent to:
\begin{equation}
\begin{aligned}\label{equ-9}
& \underset{\bm{W},\bm{H}} {\text{minimize}} && \|\bm{X} - \bm{W}\bm{H}\|_F^2\\
& \text{subject to}  && \bm{W}\in \Omega_w := \{\bm{W} \in \mathbb{R}_+^{p \times r}: \|\bm{W}\|_{2,0} \leq k\},\\
&                    && \bm{H}\in \{\bm{H} \in \mathbb{R}_+^{r \times n}: \|\bm{h}_j\|_1^2 = \|\bm{h}_j\|_2^2, \forall j\}.
\end{aligned}
\end{equation}
We consider its penalized formulation and present a SSNMF framework as follows:
\begin{equation}
\begin{aligned}\label{equ-10}
& \underset{\bm{W},\bm{H}} {\text{minimize}} && \frac{1}{2} \|\bm{X} - \bm{W}\bm{H}\|_F^2 + \frac{\rho}{2}\sum_{j=1}^n \Big((\bm{1}^T\bm{h}_j)^2 - \|\bm{h}_j\|_2^2\Big)\\
& \text{subject to}  && \bm{W}\in \Omega_w := \{\bm{W} \in \mathbb{R}_+^{p \times r}: \|\bm{W}\|_{2,0} \leq k\}, \\
&                    && \bm{H} \in  \Omega_h := \mathbb{R}_+^{r \times n}.
\end{aligned}
\end{equation}
Based on the SSNMF framework (\ref{equ-10}), we introduce the following four SSNMF models as follows:
\begin{itemize}
  \item \textbf{Row-sparse NMF} with $\ell_{2,0}$-norm constraint (NMF\_$\ell_{20}$). When $\rho=0$, the penalty term has no effect and Eq. (\ref{equ-10}) reduces to NMF\_$\ell_{20}$.
  \item \textbf{Row-sparse ONMF} with $\ell_{2,0}$-norm constraint (ONMF\_$\ell_{20}$). When $\rho$ is large enough, Eq. (\ref{equ-10}) reduces to ONMF\_$\ell_{20}$. As $\rho \rightarrow \infty$ in Eq. (\ref{equ-10}), the impact of the penalty grows, and the estimated $\bm{H}$ will approach a nonnegative orthogonal matrix.
  \item \textbf{Column-sparse NMF} with $\ell_{c,0}$-norm constraint (NMF\_$\ell_{c0}$). When $\rho=0$ and $\Omega_w$ is defined in Eq. (\ref{equ-8}),  Eq. (\ref{equ-10}) reduces to NMF\_$\ell_{c0}$.
  \item \textbf{Column-sparse ONMF} with $\ell_{c,0}$-norm constraint (ONMF\_$\ell_{c0}$). When $\rho$ is large enough and $\Omega_w$ is defined in Eq. (\ref{equ-8}), Eq. (\ref{equ-10}) reduces to ONMF\_$\ell_{c0}$.
\end{itemize}
We first present Proposition \ref{prop-1} to clarify the relationship between the solutions of Eq. (\ref{equ-10}) and Eq. (\ref{equ-7}), whose proof is shown in the appendix \ref{appendix-a2}.
Proposition \ref{prop-1} implies that we can solve Eq. (\ref{equ-7}) by repeatedly solving Eq. (\ref{equ-10}) with a gradually increasing $\rho$. Therefore, the key to solve the above four models (NMF\_$\ell_{20}$, NMF\_$\ell_{c0}$, ONMF\_$\ell_{20}$, and ONMF\_$\ell_{c0}$) is to solve the SSNMF framework (\ref{equ-10}).
\begin{prop}\label{prop-1}
Let $(\bm{W}^*, \bm{H}^*)$  be a local minimizer of Eq. (\ref{equ-10}). Then for any $\rho > 0$, $(\bm{W}^*, \bm{H}^*)$ is also the feasible and local minimizer of Eq. (\ref{equ-7}).
\end{prop}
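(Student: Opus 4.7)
The plan is to invoke the standard penalty-function logic: on the feasible set of Eq.~(\ref{equ-7}) the extra term in Eq.~(\ref{equ-10}) vanishes, so the two objectives agree there (up to a factor of $1/2$), while off that set the penalty is strictly positive. To make this precise, define $P(\bm{H}) := \sum_{j=1}^n\bigl((\bm{1}^T\bm{h}_j)^2 - \|\bm{h}_j\|_2^2\bigr) = 2\sum_{j=1}^n\sum_{i<k} h_{ij}h_{kj}$, and observe that for every $\bm{H}\in\mathbb{R}_+^{r\times n}$ one has $P(\bm{H})\geq 0$, with equality precisely when each column of $\bm{H}$ has at most one nonzero entry. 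Combined with the equivalence between Eq.~(\ref{equ-7}) and Eq.~(\ref{equ-9}) cited from \cite{wang2019clustering}, this yields the key identity: $\bm{H}$ is feasible for Eq.~(\ref{equ-7}) iff $\bm{H}\in\mathbb{R}_+^{r\times n}$ and $P(\bm{H})=0$.

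With this setup, I would split the proof into two steps. First, I would argue that the local minimizer $(\bm{W}^*,\bm{H}^*)$ of Eq.~(\ref{equ-10}) satisfies $P(\bm{H}^*)=0$, so it is feasible for Eq.~(\ref{equ-7}); if instead $P(\bm{H}^*)>0$, one identifies a column $j$ with two rows $i\neq k$ for which $h_{ij}^*,h_{kj}^*>0$ and shows that a local rearrangement of mass within that column (for instance $(h_{ij}^*,h_{kj}^*)\mapsto(h_{ij}^*+t,h_{kj}^*-t)$) strictly reduces the penalty while the fitting term $\|\bm{X}-\bm{W}^*\bm{H}\|_F^2$ varies smoothly in $t$, producing a descent direction for Eq.~(\ref{equ-10}) that contradicts local optimality. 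Second, with feasibility established, I would argue by contradiction: if $(\bm{W}^*,\bm{H}^*)$ were not a local minimizer of Eq.~(\ref{equ-7}), every neighborhood would contain a point $(\tilde{\bm{W}},\tilde{\bm{H}})$ feasible for Eq.~(\ref{equ-7}) with $\|\bm{X}-\tilde{\bm{W}}\tilde{\bm{H}}\|_F^2 < \|\bm{X}-\bm{W}^*\bm{H}^*\|_F^2$, and since $P(\tilde{\bm{H}})=0=P(\bm{H}^*)$ this same point would produce a strictly smaller value of the Eq.~(\ref{equ-10}) objective, contradicting local optimality there.

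The delicate step, and the one I expect to absorb most of the work, is the first: showing that a local minimum of the penalized problem must lie on $\{P=0\}$ for every $\rho>0$. Classical exact-penalty machinery typically requires $\rho$ to exceed a threshold tied to Lagrange multipliers of Eq.~(\ref{equ-7}), so the ``any $\rho>0$'' wording is strong; success hinges on exploiting that $P$ is quadratic with a clean descent structure under mass-transfer perturbations and that the $\ell_{2,0}$ constraint on $\bm{W}$ does not restrict the $\bm{H}$-slice where the descent lives. Once this obstacle is cleared, the second step is a one-line penalty-method deduction and the proposition follows.
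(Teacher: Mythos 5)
Your second step (once feasibility is in hand, the penalty vanishes on the feasible set of Eq.~(\ref{equ-7}) and local optimality transfers) is correct and is exactly what the paper does. The gap is in your first step, and it sits precisely where you feared. The mass-transfer perturbation $(h_{ij}^*,h_{kj}^*)\mapsto(h_{ij}^*+t,h_{kj}^*-t)$ changes the $j$-th column of $\bm{W}^*\bm{H}$ by $t(\bm{w}_i^*-\bm{w}_k^*)$ (columns $i$ and $k$ of $\bm{W}^*$), so the fitting term also moves at first order in $t$. Since both entries are strictly positive, $\pm t$ are both feasible, so at a local minimizer of Eq.~(\ref{equ-10}) the first-order terms of fit and penalty cancel along this line, and the restriction of the objective to the line is a quadratic in $t$ with second derivative $\|\bm{w}_i^*-\bm{w}_k^*\|^2-2\rho$. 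Whenever $\rho\leq \|\bm{w}_i^*-\bm{w}_k^*\|^2/2$ this is nonnegative and the perturbation yields no contradiction: an $\bm{H}$-only descent argument cannot establish the claim for \emph{all} $\rho>0$, which is why exact-penalty results normally need a threshold on $\rho$. No refinement of the mass-transfer direction fixes this.

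The missing idea is a \emph{joint} perturbation of $(\bm{W},\bm{H})$ that leaves the product, hence the fitting term, exactly invariant. The paper takes $\bm{Z}_\alpha=(\bm{W}^*/\alpha,\,\alpha\bm{H}^*)$ with $\alpha\in(0,1)$. This point remains feasible for Eq.~(\ref{equ-10}): nonnegativity is preserved and the row support of $\bm{W}^*/\alpha$ equals that of $\bm{W}^*$, so $\|\bm{W}^*/\alpha\|_{2,0}=\|\bm{W}^*\|_{2,0}\leq k$. Moreover $(\bm{W}^*/\alpha)(\alpha\bm{H}^*)=\bm{W}^*\bm{H}^*$, while each column penalty $\phi(\bm{h}_j):=(\bm{1}^T\bm{h}_j)^2-\|\bm{h}_j\|_2^2$ scales as $\phi(\alpha\bm{h}_j^*)=\alpha^2\phi(\bm{h}_j^*)$. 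If some column of $\bm{H}^*$ had two positive entries, then $\sum_j\phi(\bm{h}_j^*)>0$ and the objective of Eq.~(\ref{equ-10}) changes by $\frac{\rho}{2}(\alpha^2-1)\sum_j\phi(\bm{h}_j^*)<0$ for every $\alpha\in(0,1)$ and every $\rho>0$, with $\bm{Z}_\alpha\to\bm{Z}^*$ as $\alpha\to1$, contradicting local minimality. This scaling trick is what makes the ``any $\rho>0$'' statement true, and it is the ingredient your proposal lacks.
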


Alternating minimization is a popular strategy to solve the constrained and penalized matrix factorization problem in Eq. (\ref{equ-10}). Recently, a PALM algorithm has been proposed to solve a class of constrained and penalized matrix factorization problems which satisfies K\L~property \cite{bo2014proximal}. We present Theorem \ref{lem-1} to show that Eq. (\ref{equ-10}) satisfies K\L~property, such that the PALM algorithm can be used to solve it.

%\begin{theorem}\label{lem-1}
\begin{theorem}\label{lem-1}
	$\Phi = F(\bm{W},\bm{H}) + \delta_{\bm{W}\geq 0} + \delta_{\bm{H}\geq 0} + \delta_{\|\bm{W}\|_{2,0} \leq k}$ is a semi-algebraic function and it satisfies the K\L~property, where $F = \frac{1}{2} \|\bm{X} - \bm{W}\bm{H}\|_F^2 + \frac{\rho}{2}\sum_{j=1}^n \Big((\bm{1}^T\bm{h}_j)^2 - \|\bm{h}_j\|_2^2\Big)$, and $\delta_{\|\bm{W}\|_{2,0} \leq k}$ is zero if $\|\bm{W}\|_{2,0} \leq k$, otherwise $+ \infty$.
\end{theorem}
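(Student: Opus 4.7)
The plan is to use the standard fact (due to Bolte, Daniilidis, and Lewis) that every proper lower semicontinuous semi-algebraic function automatically satisfies the Kurdyka–\L ojasiewicz property. Thus, the whole task reduces to showing that each of the four summands defining $\Phi$ is semi-algebraic, and then invoking the fact that finite sums of semi-algebraic functions are semi-algebraic.

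First I would handle the smooth part. The objective $F(\bm{W},\bm{H})=\tfrac12\|\bm{X}-\bm{W}\bm{H}\|_F^2+\tfrac{\rho}{2}\sum_{j=1}^n\bigl((\bm{1}^T\bm{h}_j)^2-\|\bm{h}_j\|_2^2\bigr)$ is clearly a polynomial in the entries of $\bm{W}$ and $\bm{H}$, so its graph is cut out by polynomial equalities, which makes it semi-algebraic. Next, the two nonnegativity indicators $\delta_{\bm{W}\geq 0}$ and $\delta_{\bm{H}\geq 0}$ are indicators of sets defined by the polynomial inequalities $w_{ij}\geq 0$ and $h_{ij}\geq 0$; such sets are basic semi-algebraic, and indicators of semi-algebraic sets are semi-algebraic functions.

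The one step that needs real attention is the $\ell_{2,0}$ constraint $\delta_{\|\bm{W}\|_{2,0}\leq k}$. Here I would argue combinatorially: the condition $\|\bm{W}\|_{2,0}\leq k$ says that at most $k$ rows of $\bm{W}$ are nonzero, equivalently that at least $p-k$ rows vanish identically. I would write
\begin{equation*}
\{\bm{W}\in\mathbb{R}^{p\times r}:\|\bm{W}\|_{2,0}\leq k\}
=\bigcup_{\substack{S\subseteq\{1,\dots,p\}\\ |S|=p-k}}
\Bigl\{\bm{W}:\, w_{ij}=0\ \forall i\in S,\ \forall j\Bigr\}.
\end{equation*}
Each set in the union is cut out by finitely many polynomial equalities (hence is semi-algebraic), and the union is taken over finitely many index sets $S$. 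Since finite unions of semi-algebraic sets are semi-algebraic, the feasible set is semi-algebraic, and so is its indicator function.

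Finally I would combine the pieces: $\Phi$ is a sum of four semi-algebraic functions, hence itself semi-algebraic, and by the Bolte–Daniilidis–Lewis theorem it satisfies the K\L{} property on its (nonempty, closed) domain. The main obstacle is really only the row-sparsity indicator: the $\ell_{2,0}$ pseudo-norm is neither convex nor continuous, so one must avoid trying to describe the feasible set by a single inequality and instead exploit its combinatorial structure via the finite union above. Everything else is a routine application of the closure properties of the semi-algebraic class.
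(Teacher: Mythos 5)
Your proof is correct, and its overall skeleton matches the paper's: decompose $\Phi$ into the four summands, show each is semi-algebraic, use closure of the semi-algebraic class under finite sums, and invoke the Bolte--Daniilidis--Lewis result (Theorem 3 of the PALM paper) that proper, lower semicontinuous, semi-algebraic functions are K\L. Where you genuinely diverge is on the one nontrivial summand, $\delta_{\|\bm{W}\|_{2,0}\leq k}$. The paper argues via function composition: it cites that $\|\cdot\|_0$ and $\|\cdot\|_2$ are semi-algebraic and that compositions of semi-algebraic functions are semi-algebraic, so $\|\bm{W}\|_{2,0}=\|(\|\bm{w}^1\|_2,\dots,\|\bm{w}^p\|_2)\|_0$ is semi-algebraic and hence so is the indicator of its sublevel set. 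You instead describe the constraint set explicitly as the finite union, over all index sets $S$ with $|S|=p-k$, of the coordinate subspaces $\{\bm{W}: w_{ij}=0,\ i\in S\}$, each cut out by polynomial equalities, and use closure under finite unions. Your route is more elementary and self-contained --- it needs no black-box facts about $\|\cdot\|_0$ or about composition --- and it makes the combinatorial structure of the feasible set transparent; the paper's route is shorter and generalizes immediately to other functions built from semi-algebraic pieces. One small housekeeping point: like the paper, you should explicitly record that $\Phi$ is proper and lower semicontinuous (the constraint set is nonempty and closed, and $F$ is continuous) before applying the K\L\ theorem; you gesture at this parenthetically, and it deserves one explicit sentence.
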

\begin{proof}
	Remark 8 in \cite{bo2014proximal} shows the following conclusions:
	(1) $\|\cdot\|_0$ and $\|\cdot\|_2$ are semi-algebraic functions;
	(2) The indicator function in a semi-algebraic set is semi-algebraic;
	(3) Any composition of semi-algebraic function remains to be semi-algebraic;
	(4) The finite sums of semi-algebraic functions remain semi-algebraic.
	Based the above conclusions, we know that
	(1) $F$, $\delta_{\bm{W}\geq 0}$ and $\delta_{\bm{H}\geq 0}$ are semi-algebraic functions, respectively;
	(2) $\delta_{\|\bm{W}\|_{2,0} \leq k}$ is a semi-algebraic function because $\|\bm{W}\|_{2,0}:=\|(\|\bm{w}^1\|_2, \cdots, \|\bm{w}^p\|_2)\|_0$ is a composition of semi-algebraic function $\|\cdot\|_0$ and $\|\cdot\|_2$.	
	Thus, we prove that $\Phi$ is a semi-algebraic function because it is a sum of four semi-algebraic functions. In addition, we observe that $\Phi$ is a proper and lower semicontinuous function.
	Based on the Theorem 3 in \cite{bo2014proximal}, a proper, lower semicontinuous and semi-algebraic function $\Phi$ satisfies the K\L~property. The relevant mathematical definitions are shown in Appendix \ref{appendix-a1}.	
\end{proof}

By  the way, the proof of Theorem \ref{lem-1} also implies that the $\ell_{2,0}$-norm satisfies the K\L~property such that the PALM can be used to solve a class of optimization problems with $\ell_{2,0}$-norm constraint.

%\begin{remark}\label{remark-2}
%	Theorem \ref{lem-1} implies that $\delta_{\|\bm{W}\|_{2,0} \leq k}$ is a semi-algebraic function and satisfies the K\L~property such that the PALM algorithm can be used to solve a class of optimization problems with $\ell_{2,0}$-norm constraint.
%\end{remark}

\section{Optimization method}
\subsection{PALM and maPALM}
We first introduce a general constrained and penalized matrix factorization model as follows:
\begin{equation} %\frac{\rho}{2}\sum_{j=1}^n \Big((\bm{1}^T\bm{h}_j)^2 - \|\bm{h}_j\|_2^2\Big)
	\begin{aligned}\label{equ:11}
		& \underset{\bm{W},\bm{H}} {\text{minimize}} && F(\bm{W},\bm{H}) = \frac{1}{2} \|\bm{X} - \bm{W}\bm{H}\|_F^2 + \phi(\bm{W}) + \varphi(\bm{H})\\
		& \text{subject to}  && \bm{W}  \in \Omega_w , \bm{H} \in  \Omega_h.
	\end{aligned}
\end{equation}
Obviously, the SSNMF framework in Eq. (\ref{equ-10}) is a special case of Eq. (\ref{equ:11}).
To solve Eq. (\ref{equ:11}) using the PALM algorithm, we need to perform a projected gradient descent step with respect to $\bm{H}$ and $\bm{W}$ for $t =1,2,\cdots$:
\begin{subequations}\label{equ-11}
\begin{align*}
\bm{H}^{t+1} & = \mathcal{P}_{H\in \Omega_H} \Big\{ \bm{H}^{t} - \frac{1}{d_H^t} \nabla_H F(\bm{W}^{t}, \bm{H}^{t}) \Big\}, \\
\bm{W}^{t+1} & = \mathcal{P}_{W\in \Omega_W} \Big\{ \bm{W}^{t} - \frac{1}{d_W^t} \nabla_W F(\bm{W}^{t}, \bm{H}^{t+1}) \Big\},
\end{align*}
\end{subequations}
where $d_H^t$ and $d_W^t$ are two step-size parameters, $\mathcal{P}_{H\in \Omega_H}\{\cdot\}$ and $\mathcal{P}_{W\in \Omega_W}\{\cdot\}$ are two projection operations onto $\Omega_H$ and $\Omega_W$, respectively. A basic algorithmic framework for solving Eq. (\ref{equ:11}) is shown in Algorithm \ref{alg-1}.

Because of the linearization of PALM, its convergence speed may be slow. Accelerated proximal gradient  method has been widely used to solve convex optimization problems. Unfortunately, if the non-monotone accelerated proximal gradient uses a bad extrapolation for some non-convex problems, then it may not converge to a critical point \cite{li2015accelerated}. Fortunately, the monotone accelerated method guarantees convergence for a non-convex problem by ensuring that its objective function value decreases every iteration \cite{li2017convergence,li2015accelerated}.

\begin{algorithm}[htbp]
	\caption{PALM for solving Eq. (\ref{equ:11}).} \label{alg-1}
	\begin{algorithmic}[1]
		\REQUIRE $\bm{X}\in \mathbb{R}^{p\times n}$ and $\epsilon>0$.
		\ENSURE $\bm{W}\in \mathbb{R}^{p\times r}$ and $\bm{H}\in \mathbb{R}^{r\times n}$.
		\STATE Initialize $(\bm{W}^{0},\bm{H}^{0})$ and set $t=0$
		\REPEAT
		\STATE Compute $d_H^t$
		\STATE $\bm{H}^{t+1}  = \mathcal{P}_{H\in \Omega_H} \Big\{ \bm{H}^{t} - \frac{1}{d_H^t} \nabla_H F(\bm{W}^{t}, \bm{H}^{t}) \Big\}$
		\STATE Compute $d_W^t$
		\STATE $\bm{W}^{t+1}  = \mathcal{P}_{W\in \Omega_W} \Big\{ \bm{W}^{t} - \frac{1}{d_W^t} \nabla_W F(\bm{W}^{t}, \bm{H}^{t+1}) \Big\}$
		\STATE $t = t + 1$
		\UNTIL $\frac{\|(\bm{W}^{t},\bm{H}^{t})-(\bm{W}^{t-1},\bm{H}^{t-1})\|}{\| (\bm{W}^{t-1},\bm{H}^{t-1}) \|} < \epsilon$
		\RETURN $\bm{W}:=\bm{W}^{t}$ and $\bm{H}:=\bm{H}^{t}$.
	\end{algorithmic}
\end{algorithm}

\begin{algorithm}[htbp]
\caption{maPALM for solving Eq. (\ref{equ:11}).} \label{alg-2}
\begin{algorithmic}[1]
\REQUIRE $\bm{X}\in \mathbb{R}^{p\times n}$, $\epsilon>0$ and $\omega_0$.
\ENSURE $\bm{W}\in \mathbb{R}^{p\times r}$ and $\bm{H}\in \mathbb{R}^{r\times n}$.
\STATE Initialize $(\bm{W}^{-1},\bm{H}^{-1})= (\bm{W}^{0},\bm{H}^{0})$ and $t=0$
\REPEAT
\STATE $\widetilde{\bm{H}}^t = \bm{H}^{t} + \omega_{t} (\bm{H}^{t}-\bm{H}^{t-1})$
\STATE Compute $d_H^t$
\STATE $\widetilde{\bm{H}}^{t+1} = \mathcal{P}_{H\in \Omega_H} \Big\{ \widetilde{\bm{H}}^{t} - \frac{1}{d_H^t} \nabla_H F(\bm{W}^{t}, \widetilde{\bm{H}}^{t}) \Big\}$
\STATE $\widetilde{\bm{W}}^t = \bm{W}^{t} + \omega_{t} (\bm{W}^{t}-\bm{W}^{t-1})$
\STATE Compute $d_W^t$
\STATE $\widetilde{\bm{W}}^{t+1} = \mathcal{P}_{W\in \Omega_W} \Big\{ \widetilde{\bm{W}}^{t} - \frac{1}{d_W^t} \nabla_W F(\widetilde{\bm{W}}^{t}, \widetilde{\bm{H}}^{t+1}) \Big\}$
\IF{$F(\widetilde{\bm{W}}^{t+1}, \widetilde{\bm{H}}^{t+1}) \leq F(\bm{W}^{t},\bm{H}^{t})$}
\STATE $(\bm{W}^{t+1},\bm{H}^{t+1}) := (\widetilde{\bm{W}}^{t+1},\widetilde{\bm{H}}^{t+1})$
\ELSE
\STATE update $(\bm{W}^{t+1},\bm{H}^{t+1})$ using Eq. (\ref{equ-11})
\ENDIF
\STATE Compute $\omega_{t}$ and set $t = t + 1$
\UNTIL $\frac{\|(\bm{W}^{t},\bm{H}^{t})-(\bm{W}^{t-1},\bm{H}^{t-1}) \|}{\| (\bm{W}^{t-1},\bm{H}^{t-1}) \|} < \epsilon$
\RETURN $\bm{W} := \bm{W}^{t}$ and $\bm{H} := \bm{H}^{t}$.
\end{algorithmic}
\end{algorithm}

To this end, we develop a monotone accelerated PALM framework and its details is shown in Algorithm \ref{alg-2}, which can be regarded as a special case of the block prox-linear method \cite{xu2017globally}. Similarly, maPALM ensures the objective function value decreases. Reference \cite{li2015accelerated,xu2017globally}, $\omega_k$ ($k=0, 1,2,\cdots$) in Algorithm \ref{alg-2} is dynamically updated by
\begin{equation}\label{equ-12}
  \omega_{k} = \frac{\tau_k - 1}{\tau_{k + 1}},
\end{equation}
where $\tau_{0}  = 1$ and $\tau_{k+1} = \Big(1+\sqrt{1+4\tau_k^2} \Big)/2$.

We note that maPALM reduces to PALM when $\omega_{t}  = 0$ for all $t$.
Especially, NMF\_$\ell_{20}$ and NMF\_$\ell_{c0}$ can be solved by Algorithm \ref{alg-1} and \ref{alg-2} with $\phi(\bm{W})= \varphi(\bm{H})=0$. 
In addition, the conclusion of Proposition \ref{prop-1} implies that we can solve Eq. (\ref{equ-7}) by repeatedly solving Eq. (\ref{equ-10}) with a gradually increasing $\rho$. To this end, we can develop an efficient algorithm for ONMF\_$\ell_{20}$ or ONMF\_$\ell_{c0}$ by turning ONMF\_$\ell_{20}$ or ONMF\_$\ell_{c0}$ into a sequence subproblems.

In summary, the key to solving NMF\_$\ell_{20}$, NMF\_$\ell_{c0}$,  ONMF\_$\ell_{20}$ and ONMF\_$\ell_{c0}$ is to solve the SSNMF framework in Eq. (\ref{equ-10}). 
Below we show the details of using the maPALM algorithm to solve it.

\subsection{Solve SSNMF with $\ell_{2,0}$-norm constraint}
We use maPALM to solve a row-sparse SSNMF in Eq. (\ref{equ-10}) with $\ell_{2,0}$-norm constraint:
\begin{equation}\label{equ-13}
\begin{aligned}
& \underset{\bm{W},\bm{H}} {\text{minimize}} && \frac{1}{2} \|\bm{X} - \bm{W}\bm{H}\|_F^2 + \frac{\rho}{2}\sum_{j=1}^n \Big((\bm{1}^T\bm{h}_j)^2 - \|\bm{h}_j\|_2^2\Big) \\
& \text{subject to}  && \bm{W} \in \mathbb{R}_+^{p \times r}, \|\bm{W}\|_{2,0} \leq k, \bm{H} \in \mathbb{R}_+^{r \times n}.
\end{aligned}
\end{equation}
Let $F = \frac{1}{2} \|\bm{X} - \bm{W}\bm{H}\|_F^2 + \frac{\rho}{2}\sum_{j=1}^n \Big((\bm{1}^T\bm{h}_j)^2 - \|\bm{h}_j\|_2^2\Big)$, then
\begin{subequations}\label{equ-14}
\begin{gather}
\nabla_H F = \bm{W}^T\bm{W}\bm{H} - \bm{W}^T\bm{X} + \rho\bm{1}_{r\times r}\bm{H} - \rho\bm{H},\\
\nabla_W F = \bm{W}\bm{H}\bm{H}^T - \bm{X}\bm{H}^T.
\end{gather}
\end{subequations}
% ----------------------------------------------------------------------------------
And the Hessian matrices of $F$ with respect to $\bm{W}$ and $\bm{H}$ are
\begin{subequations} \label{equ-15}
\begin{gather}
\nabla_W^2 F = (\bm{H}\bm{H}^T) \otimes \bm{I}_p,  \\
\nabla_H^2 F = \bm{I}_p \otimes (\bm{W}^T\bm{W} + \rho\bm{1}_{r\times r} - \rho \bm{I}_r ).
\end{gather}
\end{subequations}
where $\otimes$ is Kronecker product and $\bm{I}_p \in \mathbb{R}^{p \times p}$ is an identity matrix.
To use maPALM to solve Eq. (\ref{equ-13}), we need to calculate Lipschitz constant to determine the step size.
The Lemma 2 in reference \cite{guan2012nenmf} shows that $\nabla_W F$ and $\nabla_H F$ are Lipschitz continuous, the Lipschitz constant of $\nabla_W F$ is the largest singular value of $\nabla_W^2F$, \emph{i.e.}, $L_W = \|\bm{H}\bm{H}^T\|_2$, and the Lipschitz constant of $\nabla_H F$ is the largest singular value of $\nabla_W^2F$,
\emph{i.e.}, $L_H = \|\bm{W}^T\bm{W} + \rho \bm{1}_{r\times r} - \rho \bm{I}_r \|_2$.
Thus, we can set $d_W^t=L_W$ and $d_H^t=L_H$ in Algorithms \ref{alg-1} to \ref{alg-2}.

\textbf{1) Optimize W}.
Specifically, to obtain the update of $\bm{W}$ for Eq. (\ref{equ-13}), we need to solve a proximal map $\bm{W}^{t+1} := \mathcal{P}_{W\in \Omega_W}\{\overline{\bm{W}}\}$ as follows:
\begin{equation}\label{equ-16}
\begin{aligned}
& \underset{\bm{W}} {\text{minimize}} && \|\bm{W} - \overline{\bm{W}}\|_F^2 \\
& \text{subject to}  && \bm{W} \in \mathbb{R}_+^{p \times r}, \|\bm{W}\|_{2,0} \leq k,
\end{aligned}
\end{equation}
where $\overline{\bm{W}} := \bm{W}^{t} - \frac{1}{d_W^t} \nabla_W F(\bm{W}^{t}, \bm{H}^{t})$.
We propose Proposition \ref{prop-2} to solve the proximal map.
To this end, we introduce the following mathematical definitions.
% ----------------------------------------------------------------------------------
\begin{defn}\label{defn-1}
$\mathrm{SupportNorm}(\bm{W}, k)$ is a set of indices of $\bm{z}$ with the largest $k$ values where $\bm{z} = (\|\bm{w}^1\|, \cdots, \|\bm{w}^p\|)$ and $\bm{w}^i$ denotes $i$-th row of $\bm{W}$.
\end{defn}
% ----------------------------------------------------------------------------------
\begin{defn}\label{defn-2}
For a given matrix $\bm{W} \in \mathbb{R}^{p\times r}$, $\mathrm{RS}_k(\bm{W})$ is also a $p\times r$ matrix which is defined as:
\begin{equation} \label{equ-17}
 [\mathrm{RS}_k(\bm{W})]_{ij} =
\begin{cases}
    W_{ij},  & \text{if }i \in \mathrm{SupportNorm}(\bm{W}, k),\\
    0,       & \text{otherwise},
\end{cases}
\end{equation}
where $\mathrm{SupportNorm}(\bm{W}, k)$ is defined in Definition \ref{defn-1}. $\mathrm{RS}_k(\bm{W})$ only keeps $k$ non-zero rows with the largest $\ell_2$-norm values in $\bm{W}$.
\end{defn}
% ----------------------------------------------------------------------------------
\begin{defn}\label{defn-3}
For a given matrix $\bm{W} \in \mathbb{R}^{p\times r}$, $P_+(\bm{W})$ is defined as follows:
\begin{equation} \label{equ-18}
 [P_+(\bm{W})]_{ij} =
\begin{cases}
    W_{ij},  & \text{if}~W_{ij}>0~\forall i~\text{and}~j,\\
    0,       & \text{otherwise}.
\end{cases}
\end{equation}
\end{defn}
% ----------------------------------------------------------------------------------
\begin{prop}\label{prop-2}
(Proximal map formula of $\bm{W}$ for ONMF\_$\ell_{20}$)
Let $\overline{\bm{W}} \in \mathbb{R}^{p\times r}$, then Eq. (\ref{equ-16}) has a closed-form solution
\begin{equation}\label{equ-19}
 \mathcal{P}_{W\in \Omega_W}\{\overline{\bm{W}}\} := \mathrm{RS}_k(P_+(\overline{\bm{W}})),
\end{equation}
where $\mathrm{RS}_k(\cdot)$ and $P_+(\cdot)$ are defined in Definition \ref{defn-2} and \ref{defn-3}, respectively.
\end{prop}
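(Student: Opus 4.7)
The plan is to exploit full row-separability of both the objective and the constraints in (\ref{equ-16}), then reduce the remaining combinatorial part to a top-$k$ selection. First I would rewrite $\|\bm{W}-\overline{\bm{W}}\|_F^2 = \sum_{i=1}^p \|\bm{w}^i - \overline{\bm{w}}^i\|_2^2$. The nonnegativity constraint $\bm{W}\geq 0$ decouples entry-wise, while $\|\bm{W}\|_{2,0} \leq k$ couples the rows only through the integer-valued count of nonzero rows. Therefore, once a candidate support $S \subseteq \{1,\dots,p\}$ with $|S| \leq k$ is fixed, the feasible set is a product of per-row feasible sets and the problem splits into $p$ independent row problems.

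Next I would solve these inner problems in closed form. For $i \notin S$ the only feasible choice is $\bm{w}^i = \bm{0}$, contributing $\|\overline{\bm{w}}^i\|^2$ to the objective. For $i \in S$, the subproblem $\min_{\bm{w}^i \geq 0} \|\bm{w}^i - \overline{\bm{w}}^i\|^2$ is the standard Euclidean projection onto the nonnegative orthant, whose unique minimizer is the entry-wise positive part $P_+(\overline{\bm{w}}^i)$ (Definition \ref{defn-3}), with optimal value $\|\overline{\bm{w}}^i\|^2 - \|P_+(\overline{\bm{w}}^i)\|^2$. Summing over $i$ yields
\begin{equation*}
\min_{\bm{W}\geq 0,\,\mathrm{supp}(\bm{W})\subseteq S} \|\bm{W}-\overline{\bm{W}}\|_F^2
= \sum_{i=1}^p \|\overline{\bm{w}}^i\|^2 - \sum_{i\in S} \|P_+(\overline{\bm{w}}^i)\|^2,
\end{equation*}
so the outer problem is to \emph{maximize} $\sum_{i\in S} \|P_+(\overline{\bm{w}}^i)\|^2$ subject to $|S|\leq k$.

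Since all summands are nonnegative, this is a textbook top-$k$ selection whose optimum is attained by taking the $k$ indices with the largest values of $\|P_+(\overline{\bm{w}}^i)\|$, i.e.\ $S^\star = \mathrm{SupportNorm}(P_+(\overline{\bm{W}}), k)$ by Definition \ref{defn-1}. Keeping the projected rows $P_+(\overline{\bm{w}}^i)$ on $i\in S^\star$ and zeroing the remaining rows is exactly the operator $\mathrm{RS}_k(P_+(\overline{\bm{W}}))$ of Definition \ref{defn-2}, which establishes the closed form (\ref{equ-19}).

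The main obstacle, such as it is, is purely bookkeeping around degenerate cases rather than any deep analysis. I would be explicit that the order of the two operations matters conceptually and that it is correct to apply $P_+$ \emph{before} the row-norm ranking (a row whose $\overline{\bm{w}}^i$ has large norm but entirely negative entries contributes $0$ once projected and should not occupy a slot in $S^\star$). I would also note that when fewer than $k$ rows have any positive entry, or when ties occur among the top norms, the constraint $|S|\leq k$ (rather than $=k$) absorbs the ambiguity, since any additional index with $\|P_+(\overline{\bm{w}}^i)\|=0$ leaves the objective unchanged, so $\mathrm{RS}_k(P_+(\overline{\bm{W}}))$ is still a global minimizer.
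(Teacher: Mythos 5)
Your proof is correct and follows essentially the same route as the paper's: first reduce to the nonnegative projection $P_+(\overline{\bm{W}})$, then observe that the $\ell_{2,0}$ constraint turns the remaining problem into a top-$k$ row selection by $\ell_2$-norm, yielding $\mathrm{RS}_k(P_+(\overline{\bm{W}}))$. Your version is in fact more complete than the paper's two-line argument, since you make explicit the per-support optimal value $\sum_i\|\overline{\bm{w}}^i\|^2-\sum_{i\in S}\|P_+(\overline{\bm{w}}^i)\|^2$, the reason $P_+$ must be applied \emph{before} the row ranking, and the tie/degenerate cases.
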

\begin{proof}
Suppose that the optimal solution of Eq. (\ref{equ-16}) is $\widehat{\bm{W}}$, then we can easily observe that $\widehat{W}_{ij}$ must be zero when $\overline{W}_{ij} < 0$. So, Eq. (\ref{equ-16}) is equivalent to
\begin{equation*}
\underset{\bm{W}} {\text{minimize}}~\|\bm{W} - P_+(\overline{\bm{W}})\|_F^2,~\text{subject to}~\|\bm{W}\|_{2,0} \leq k.
\end{equation*}
Due to the constraint $\|\bm{W}\|_{2,0} \leq k$, the optimal solution only keeps up to $k$ non-zero rows.
So, Eq. (\ref{equ-16}) has a closed-form solution $\mathrm{RS}_k(P_+(\overline{\bm{W}}))$.
\end{proof}

\textbf{2) Optimize H}.
To obtain the update of $\bm{H}$ for Eq. (\ref{equ-13}) , we need to solve a proximal map $\bm{H}^{t+1} := \mathcal{P}_{H\in \Omega_H}\{\overline{\bm{H}}\}$ as follows:
\begin{equation}\label{equ-20}
\underset{\bm{H}} {\text{minimize}}~\|\bm{H} - \overline{\bm{H}}\|_F^2,~\text{subject to}~\bm{H} \in \mathbb{R}_+^{r \times n},
\end{equation}
where $\overline{\bm{H}} := \bm{H}^{t} - \frac{1}{d_H^t} \nabla_H F(\bm{W}^{t+1}, \bm{H}^{t})$.
We propose the following Proposition \ref{prop-3} to solve the above problem.

\begin{prop} \label{prop-3}
(Proximal map formula of $\bm{H}$ for ONMF\_$\ell_{20}$) Let $\overline{\bm{H}} \in \mathbb{R}^{r\times n}$, then  (\ref{equ-20}) has a closed-form solution:
\begin{equation}\label{equ-21}
 \mathcal{P}_{H\in \Omega_H}\{\overline{\bm{H}}\} := P_+(\overline{\bm{H}}),
\end{equation}
where $P_+(\cdot)$ is defined in Definition \ref{defn-3}.
\end{prop}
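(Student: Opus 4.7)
The plan is to exploit the full separability of the objective and constraint. The Frobenius norm squared expands as $\|\bm{H}-\overline{\bm{H}}\|_F^2=\sum_{i,j}(H_{ij}-\overline{H}_{ij})^2$, and the feasibility set $\mathbb{R}_+^{r\times n}$ is itself an entrywise constraint $H_{ij}\geq 0$ with no coupling between entries. Consequently problem (\ref{equ-20}) decomposes into $rn$ independent scalar subproblems of the form $\min_{H_{ij}\geq 0}(H_{ij}-\overline{H}_{ij})^2$, each of which is simply the Euclidean projection of the real number $\overline{H}_{ij}$ onto the half-line $[0,\infty)$.

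The scalar projection is elementary: if $\overline{H}_{ij}\geq 0$, then $\overline{H}_{ij}$ itself is feasible and attains objective value zero, hence is the unique minimizer; if $\overline{H}_{ij}<0$, then the quadratic $(t-\overline{H}_{ij})^2$ is strictly increasing on $[0,\infty)$, so its minimum over the feasible set is attained at $t=0$. Combining these two cases produces the entrywise rule $H_{ij}^{\star}=\overline{H}_{ij}$ when $\overline{H}_{ij}>0$ and $H_{ij}^{\star}=0$ otherwise, which is exactly the operator $P_+(\overline{\bm{H}})$ of Definition \ref{defn-3}. Assembling these $rn$ minimizers gives the claimed closed form $\mathcal{P}_{H\in\Omega_H}\{\overline{\bm{H}}\}=P_+(\overline{\bm{H}})$.

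There is essentially no obstacle here; the entire argument rests on separability and the standard scalar projection onto a half-line. In fact it mirrors the first step in the proof of Proposition \ref{prop-2} (the non-negative truncation $P_+$), but is strictly simpler because no row-sparsity constraint needs to be imposed afterward.
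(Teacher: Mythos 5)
Your argument is correct: the problem is fully separable over entries, and the scalar projection onto $[0,\infty)$ gives exactly the operator $P_+$ of Definition \ref{defn-3}. The paper in fact states Proposition \ref{prop-3} without any proof, treating it as immediate, and your write-up is precisely the standard justification it leaves implicit.
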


\textbf{3) Algorithm for solving Eq. (\ref{equ-13})}.
Based on the Propositions \ref{prop-2} and \ref{prop-3}, we develop a maPALM algorithm to solve Eq. (\ref{equ-13}) and the detailed algorithm is given in Algorithm \ref{alg-3}.
To maintain monotonicity, maPALM needs to use a suitable $\omega_{t}$ by checking the objective function value. If the objective function value becomes larger, then we obtain the update rule based on the traditional projected gradient descent method.
Monotonicity can ensure that Algorithm \ref{alg-3} converges to a critical point for any initial point.
The following Theorem \ref{theorem-1} gives the details on the convergence analysis of Algorithm \ref{alg-3}.
\begin{algorithm}[htbp]
\caption{maPALM for solving Eq. (\ref{equ-13}).} \label{alg-3}
\begin{algorithmic}[1]
\REQUIRE $\bm{X}\in \mathbb{R}^{p\times n}$, $\rho\geq0$, $\epsilon>0$, $k$ (nonzero rows).
\ENSURE  $\bm{W}\in \mathbb{R}^{p\times r}$ and $\bm{H}\in \mathbb{R}^{r\times n}$.
\STATE Initialize $(\bm{W}^{-1},\bm{H}^{-1})= (\bm{W}^{0},\bm{H}^{0})$, $\tau_0 = 1$ and $t=0$
\REPEAT
\STATE Compute $\omega_{t}  = \frac{\tau_t - 1}{\tau_{t + 1}}$ where $\tau_{t+1}  = \frac{1+\sqrt{1+4\tau_t^2}}{2}$
\STATE $\widetilde{\bm{H}}^t = \bm{H}^{t} + \omega_{t} (\bm{H}^{t}-\bm{H}^{t-1})$
\STATE $d_H^t = \|(\bm{W}^{t})^T(\bm{W}^{t}) + \rho \bm{1}_{r\times r} - \rho \bm{I}_r \|_2$
\STATE $\bm{H}^{t+1} = P_+ \big( \widetilde{\bm{H}}^{t} - \frac{1}{d_H^t} \nabla_H F(\bm{W}^{t}, \widetilde{\bm{H}}^{t}) \big)$
\STATE $\widetilde{\bm{W}}^t = \bm{W}^{t} + \omega_{t} (\bm{W}^{t}-\bm{W}^{t-1})$
\STATE $d_W^t = \|(\bm{H}^{t+1})(\bm{H}^{t+1})^T\|_2$
\STATE $\bm{W}^{t+1} = \mathrm{RS}_k \big( \widetilde{\bm{W}}^{t} - \frac{1}{d_W^t} \nabla_W F(\widetilde{\bm{W}}^{t}, \bm{H}^{t+1}) \big)$

\IF{$F(\widetilde{\bm{W}}^{t+1}, \widetilde{\bm{H}}^{t+1}) \leq F(\bm{W}^{t},\bm{H}^{t})$}
\STATE $(\bm{W}^{t+1},\bm{H}^{t+1}) := (\widetilde{\bm{W}}^{t+1},\widetilde{\bm{H}}^{t+1})$
\ELSE
\STATE $d_H^t = \|(\bm{W}^{t})^T(\bm{W}^{t}) + \rho \bm{1}_{r\times r} - \rho \bm{I}_r \|_2$
\STATE $\bm{H}^{t+1} = P_+ \big( \bm{H}^{t} - \frac{1}{d_H^t} \nabla_H F(\bm{W}^{t}, \bm{H}^{t}) \big)$
\STATE $d_W^t = \|(\bm{H}^{t+1})(\bm{H}^{t+1})^T\|_2$
\STATE $\bm{W}^{t+1} = \mathrm{RS}_k \big( \bm{W}^{t} - \frac{1}{d_W^t} \nabla_W F(\bm{W}^{t}, \bm{H}^{t+1}) \big)$
\ENDIF
\STATE $t = t + 1$
\UNTIL $\frac{\|(\bm{W}^{t},\bm{H}^{t})-(\bm{W}^{t-1},\bm{H}^{t-1}) \|}{\| (\bm{W}^{t-1},\bm{H}^{t-1}) \|} < \epsilon$
\RETURN $\bm{W} := \bm{W}^{t}$ and $\bm{H} := \bm{H}^{t}$.
\end{algorithmic}
\end{algorithm}

\textbf{Algorithm for NMF\_$\ell_{20}$}. Algorithm \ref{alg-3} with $\rho=0$ can effectively solve NMF\_$\ell_{20}$.

\textbf{Initialization}. We can adopt two ways to generate the initial point of Algorithm \ref{alg-3}. One is to use the random vectors from a standard normal distribution to initialize $\bm{W}$ and $\bm{H}$. The other is to use the solution $\bm{W}$ and $\bm{H}$ derived by the traditional NMF to initialize them. The second way is a good guess. So, if not specified, we use the second way to initialize by default.

\textbf{Step-size}. We can use a fixed value of the step-size $1/d_H^t$ and $1/d_W^t$ in Algorithm \ref{alg-3}, and also try to perform an approximate backtracking line search from $t \in (0, 0.5)$. Specifically, we use the fixed $d_W^t=L_W$ and $d_H^t=L_H$ in this paper.

\textbf{4) Computation cost}.
The computational complexity of Algorithm \ref{alg-3} depends on the number of iterations.
At each iteration, only two simple closed-form solutions need to be computed with respect to $\bm{H}$ and $\bm{W}$ in the steps 6 and 9, respectively.
For each update of $\bm{H}$, the most costly step is the calculation of $\nabla_H F$, which requires a computation cost of $O(pr^2 + nr^2)$.
For each update of $\bm{W}$, the most costly steps is the calculation of $\nabla_W F$, which requires a computation cost of $O(pr^2 + nr^2 + pnr)$.
In addition, the calculation of objective function value requires a computation cost of $O(pnr + pn +rn)$.
Thus, each iteration of Algorithm \ref{alg-3} requires a computation cost of $O(pr^2 + nr^2 + pnr)$.

\textbf{5) Convergence analysis}.
Based on some results from references \cite{xu2017globally,bo2014proximal}, we propose the following Theorem \ref{theorem-1} to show that Algorithm \ref{alg-3} globally converges to a critical point.

\begin{theorem}\label{theorem-1}
(Global convergence of Algorithm \ref{alg-3})
Let $\{(\bm{W}^{(i)}, \bm{H}^{(i)})\}_{i=1}^{\infty}$ be a sequence generated from any starting point $(\bm{W}^{(0)}, \bm{H}^{(0)})$ by Algorithm \ref{alg-3}. If $\{(\bm{W}^{(i)}, \bm{H}^{(i)})\}$ are bounded, then the objective is non-increasing and the sequence has finite length and converges to a critical point.
\end{theorem}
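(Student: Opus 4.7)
The plan is to invoke the general convergence machinery for monotone block prox-linear methods on semi-algebraic objectives. Writing $\Phi(\bm{W},\bm{H}) = F(\bm{W},\bm{H}) + \delta_{\bm{W}\geq 0} + \delta_{\bm{H}\geq 0} + \delta_{\|\bm{W}\|_{2,0}\leq k}$, Theorem \ref{lem-1} already establishes that $\Phi$ is proper, lower semicontinuous, semi-algebraic, and hence satisfies the K\L~property. My strategy is then to verify the three standard ingredients --- \emph{sufficient decrease}, \emph{bounded relative error}, and \emph{continuity on level sets} --- so that the abstract convergence theorem adapted in \cite{bo2014proximal,xu2017globally} applies; its conclusions (non-increase of $F$, finite length of the sequence, and convergence to a critical point of $\Phi$) then deliver the theorem at once.

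First I would establish the non-increase property, which is essentially by construction of Algorithm \ref{alg-3}: each iteration either accepts an accelerated step that lowers $F$, or falls back to the plain PALM update. For the fallback branch, the classical descent lemma applied to the block-Lipschitz gradients from Eq. (\ref{equ-15}), with step sizes $1/L_H^t$ and $1/L_W^t$, yields a sufficient decrease of the form
\[
  \Phi(\bm{W}^{t+1},\bm{H}^{t+1}) \le \Phi(\bm{W}^t,\bm{H}^t) - c\,\bigl\|(\bm{W}^{t+1},\bm{H}^{t+1}) - (\bm{W}^t,\bm{H}^t)\bigr\|_F^2
\]
for some constant $c>0$. The safeguard if-test forces every accepted accelerated iterate to satisfy at least this same monotonicity, so the combined scheme retains sufficient decrease throughout.

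Next I would verify the bounded-subgradient property: there exists $\bm{g}^{t+1}\in\partial\Phi(\bm{W}^{t+1},\bm{H}^{t+1})$ with $\|\bm{g}^{t+1}\|\le \tilde c\,\|(\bm{W}^{t+1},\bm{H}^{t+1}) - (\bm{W}^t,\bm{H}^t)\|$. This is extracted from the first-order optimality conditions of the two proximal maps $\mathrm{RS}_k(P_+(\cdot))$ and $P_+(\cdot)$ identified in Propositions \ref{prop-2}--\ref{prop-3}: on the PALM branch, Lipschitz continuity of $\nabla F$ restricted to the bounded region containing the iterates gives the bound directly; on the accelerated branch the same estimate still holds because the extrapolation coefficients satisfy $\omega_t\in[0,1)$. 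Boundedness of the sequence (our hypothesis) makes $L_W^t=\|\bm{H}^t(\bm{H}^t)^T\|_2$ and $L_H^t=\|(\bm{W}^t)^T\bm{W}^t+\rho\bm{1}_{r\times r}-\rho\bm{I}_r\|_2$ uniformly bounded, so the constants $c$ and $\tilde c$ can be taken independent of $t$. With sufficient decrease and bounded subgradients in hand, the K\L~inequality at every cluster point telescopes into $\sum_t \|(\bm{W}^{t+1},\bm{H}^{t+1}) - (\bm{W}^t,\bm{H}^t)\| < \infty$, so the iterates are Cauchy, converge to a single limit, and the limit is critical because the subgradient sequence vanishes.

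The main obstacle is handling the acceleration safeguard rigorously: one must show the if-branch structure of Algorithm \ref{alg-3} never destroys the descent and relative-error inequalities required by the abstract K\L~framework. The cleanest path is to recognize that Algorithm \ref{alg-3} fits the monotone block prox-linear template of \cite{xu2017globally} exactly --- the accelerated step is an extrapolated proximal gradient step, the safeguard enforces monotonicity of $F$, and all hypotheses (block Lipschitz smoothness, proper lsc semi-algebraic objective, bounded iterates) have been verified above. Invoking the global convergence result of \cite{xu2017globally} (or the analogous non-accelerated result in \cite{bo2014proximal}) then delivers all three claims of Theorem \ref{theorem-1} simultaneously.
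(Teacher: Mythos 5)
Your proposal follows essentially the same route as the paper: establish the K\L/semi-algebraic property of $\Phi$ via Theorem \ref{lem-1} and then invoke the global convergence theorem for the monotone block prox-linear framework of \cite{xu2017globally} (with \cite{bo2014proximal} as the non-accelerated analogue). In fact your write-up is more careful than the paper's, which simply asserts that the assumptions of that theorem are ``clearly satisfied,'' whereas you explicitly verify sufficient decrease, the relative-error subgradient bound, the role of the monotonicity safeguard, and the uniform boundedness of the step sizes under the bounded-iterates hypothesis.
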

\begin{proof}
Based on Theorem \ref{lem-1}, the objective function $\Phi$ is a semi-algebraic function and satisfies the KL property. Checking the assumptions of Theorem 2 in reference \cite{xu2017globally}, we observe that all  assumptions required in Algorithm \ref{alg-3} are clearly satisfied.
So, we have Theorem \ref{theorem-1}.
\end{proof}

\subsection{Solve SSNMF with $\ell_{c,0}$-norm constraint}
Let $\Omega_w := \{\bm{W} \in \mathbb{R}_+^{p \times r}: \|\bm{w}_j\|_{0} \leq k, \forall j\}$ in Eq. (\ref{equ-10}), we consider a column-wise SSNMF with $\ell_{c,0}$-norm constraint by using $\ell_{0}$-norm for each column of $\bm{W}$ as follows:
\begin{equation} \label{equ-22}
\begin{aligned}
& \underset{\bm{W},\bm{H}} {\text{minimize}} && \frac{1}{2} \|\bm{X} - \bm{W}\bm{H}\|_F^2 + \frac{\rho}{2}\sum_{j=1}^n \Big((\bm{1}^T\bm{h}_j)^2 - \|\bm{h}_j\|_2^2\Big) \\
& \text{subject to}  && \bm{W} \in \mathbb{R}_+^{p \times r}, \|\bm{w}_j\|_{0} \leq k, \forall j, \bm{H} \in \mathbb{R}_+^{r \times n}.
\end{aligned}
\end{equation}
To obtain the update of $\bm{W}$, we need to solve a proximal map $\bm{W}^{t+1} := \mathcal{P}_{W\in \Omega_W}\{\overline{\bm{W}}\}$:
\begin{equation} \label{equ-23}
\begin{aligned}
& \underset{\bm{W}} {\text{minimize}} && \|\bm{W} - \overline{\bm{W}}\|_F^2 \\
& \text{subject to}  && \bm{W} \in \mathbb{R}_+^{p \times r},  \|\bm{w}_j\|_{0} \leq k, \forall j.
\end{aligned}
\end{equation}
% ---------------------------------------------------------------
\begin{defn}\label{defn-4}
For a given matrix $\bm{W} \in \mathbb{R}^{p\times r}$, $\mathrm{CS}_k(\bm{W}) \in \mathbb{R}^{p\times r}$ ($k\leq p$) is defined as:
\begin{equation} \label{equ-24}
 [\mathrm{CS}_k(\bm{W})]_{ij} =
\begin{cases}
    W_{ij},  & j \in \mathrm{SupportNorm}(\bm{w}_j, k),\\
    0,       & \text{otherwise},
\end{cases}
\end{equation}
where $\bm{w}_j$ denotes $j$-th column of $\bm{W}$ and $\mathrm{SupportNorm}(\cdot, k)$ is defined in Definition \ref{defn-1}.
\end{defn}
% ---------------------------------------------------------------
\begin{prop} \label{prop-4}
(Proximal map formula  of $\bm{W}$ for ONMF\_$\ell_{c0}$) Let $\overline{\bm{W}} \in \mathbb{R}^{p\times r}$, then Eq. (\ref{equ-23}) has a closed-form solution:
\begin{equation}\label{equ-25}
  \mathcal{P}_{W\in \Omega_W}\{\overline{\bm{W}}\} := \mathrm{CS}_k(P_+(\bm{W})),
\end{equation}
where $\mathrm{CS}_k(\cdot)$ and $P_+(\cdot)$ are defined in Definition \ref{defn-4} and \ref{defn-3}, respectively.
\end{prop}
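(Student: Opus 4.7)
The plan is to mirror the argument used for Proposition \ref{prop-2}, exploiting the fact that both the objective $\|\bm{W}-\overline{\bm{W}}\|_F^2$ and the constraint set $\Omega_w = \{\bm{W}\in \mathbb{R}_+^{p\times r}:\|\bm{w}_j\|_0\leq k,\ \forall j\}$ are separable across columns. First, I would dispose of the non-negativity constraint. Since $\|\bm{W}-\overline{\bm{W}}\|_F^2 = \sum_{i,j}(W_{ij}-\overline{W}_{ij})^2$ splits entry-wise and $\Omega_w$ only restricts the signs (to be non-negative) and the supports (column-wise $k$-sparse), any optimal $\widehat{\bm{W}}$ must satisfy $\widehat{W}_{ij}=0$ whenever $\overline{W}_{ij}<0$: indeed, replacing such an entry by $0$ strictly decreases the objective contribution from $(\overline{W}_{ij})^2$ to something smaller, while preserving both constraints. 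This reduces Eq.~(\ref{equ-23}) to
\begin{equation*}
\underset{\bm{W}}{\text{minimize}}~\|\bm{W} - P_+(\overline{\bm{W}})\|_F^2,\quad\text{subject to}~\|\bm{w}_j\|_0\leq k,\ \forall j,
\end{equation*}
where the non-negativity is now automatic because $P_+(\overline{\bm{W}})\geq 0$ and the best entry-wise approximation preserves signs.

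Next, I would decompose the reduced problem column by column: $\|\bm{W}-P_+(\overline{\bm{W}})\|_F^2 = \sum_{j=1}^{r}\|\bm{w}_j - P_+(\overline{\bm{w}}_j)\|_2^2$, with independent constraints $\|\bm{w}_j\|_0 \leq k$ on each column. Each column subproblem is the classical best $k$-sparse approximation (in $\ell_2$) of a given non-negative vector, whose closed-form solution is the hard-thresholding operator: keep the $k$ entries of $P_+(\overline{\bm{w}}_j)$ with the largest magnitudes and set the rest to zero. This is precisely what $\mathrm{CS}_k(\cdot)$ from Definition \ref{defn-4} performs when applied to $P_+(\overline{\bm{W}})$.

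Combining the two reductions yields $\mathcal{P}_{W\in\Omega_w}\{\overline{\bm{W}}\} = \mathrm{CS}_k(P_+(\overline{\bm{W}}))$, matching Eq.~(\ref{equ-25}) (I would also note the minor typo that the argument inside $P_+$ should be $\overline{\bm{W}}$, not $\bm{W}$). The only subtlety, which I would state explicitly, is that the solution need not be unique when ties occur in the magnitudes within a column; in that case $\mathrm{SupportNorm}(\bm{w}_j,k)$ picks any $k$-subset of the top indices and all such choices attain the same optimal value, so the result holds as a valid closed-form selection. I do not foresee a genuine obstacle, since both the separation across columns and the hard-thresholding step are standard; the main care is only in justifying that the column-wise treatment does not lose optimality, which follows immediately from the separability of both the objective and the constraint.
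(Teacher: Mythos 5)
Your proposal is correct and follows essentially the same route as the paper's own proof: eliminate the non-negativity constraint by observing that optimal entries must vanish where $\overline{W}_{ij}<0$, reduce to best column-wise $k$-sparse approximation of $P_+(\overline{\bm{W}})$, and solve by hard thresholding, which is exactly $\mathrm{CS}_k(P_+(\overline{\bm{W}}))$. Your added remarks on separability, tie-breaking, and the typo ($P_+(\bm{W})$ should read $P_+(\overline{\bm{W}})$) only make the argument more careful than the original.
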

\begin{proof}
Suppose that the optimal solution of Eq. (\ref{equ-23}) is $\widehat{\bm{W}}$, then we can observe that $\widehat{W}_{ij}$ must be zero when $\overline{W}_{ij} < 0$.
So, Eq. (\ref{equ-23}) is equivalent to
\begin{equation*}
\underset{\bm{W}} {\text{minimize}}~\|\bm{W} - P_+(\overline{\bm{W}})\|_F^2~\text{subject to}~\|\bm{w}_j\|_{0} \leq k, \forall j.
\end{equation*}
Due to the constraint $\|\bm{w}_j\|_{0} \leq k$, the optimal solution only keeps $k$ non-zero elements with the largest absolute values for each column of $\bm{W}$. So, Eq. (\ref{equ-23}) has a closed-form solution $\mathrm{CS}_k(P_+(\overline{\bm{W}}))$.
\end{proof}

\textbf{Algorithm for solving Eq. (\ref{equ-22})}.
Based on Proposition \ref{prop-4}, we can solve Eq. (\ref{equ-22}) by replacing $\mathrm{CS}_k(\cdot)$ with $\mathrm{RS}_k(\cdot)$ in Algorithm \ref{alg-3}.
Similar to Theorem \ref{theorem-1}, the variant of Algorithm \ref{alg-3} has theoretical convergence guarantees.

\subsection{Solve ONMF\_$\ell_{20}$ and ONMF\_$\ell_{c0}$}
We develop an efficient algorithm for ONMF\_$\ell_{20}$ by turning it into 
a series of constrained and penalized matrix factorization problems, i.e., 
Eq. (\ref{equ-13}) with different $\rho$,  which can be solved by the PALM or maPALM algorithm.
The detailed algorithm is given in Algorithm \ref{alg-4}.

\begin{algorithm}[htbp]
\caption{ONMF\_$\ell_{20}$ Algorithm} \label{alg-4}
\begin{algorithmic}[1]
\REQUIRE $\bm{X}\in \mathbb{R}^{p\times n}$, $\rho\geq0$ and $\gamma>1$.
\ENSURE $\bm{W}\in \mathbb{R}^{p\times r}$ and $\bm{H}\in \mathbb{R}^{r\times n}$.
\STATE Initialize $(\bm{W}^{0},\bm{H}^{0})$
\FOR {\texttt{$k=1,2,\cdots,K$}}
\STATE Obtain $(\bm{W},\bm{H})$ by solving Eq. (\ref{equ-14}) with $\rho$ using PALM or maPALM with the initialization $(\bm{W}^{0},\bm{H}^{0})$.
\STATE $\rho := \gamma \rho$ and $(\bm{W}^{0},\bm{H}^{0}) := (\bm{W},\bm{H})$
\ENDFOR
\RETURN $\bm{W}$ and $\bm{H}$.
\end{algorithmic}
\end{algorithm}

In Algorithm \ref{alg-4}, we set the default parameters $\rho=0.1$, $\gamma=1.5$, and $K=10$. As $\rho \rightarrow \infty$ in the step 4 of Algorithm \ref{alg-4}, the impact of the penalty grows, and the estimated $\bm{H}$ will approach a non-negative orthogonal matrix.
Finally, we can also use a method similar to Algorithm \ref{alg-4} to solve ONMF\_$\ell_{c0}$ by turning it into a series of constrained and penalized matrix factorization problems, i.e., Eq. (\ref{equ-22}) with different $\rho$.
\section{Experiments}
We evaluate the effectiveness of these proposed SSNMF methods for clustering task and compare them with state-of-the-art matrix factorization, k-means and two sparse k-means methods on the synthetic and scRNA-seq data. All competing methods are listed as follows:
\begin{itemize}
  \item ONMF\_$\ell_{20}$: ONMF with $\ell_{2,0}$-norm constraint (Eq. \ref{equ-7}).
  \item ONMF\_$\ell_{20}\_\rho$: ONMF\_$\ell_{20}$ with a fixed $\rho$ (Eq. \ref{equ-13}).
  \item ONMF\_$\ell_{c0}$: column-wise sparse ONMF (Eq. \ref{equ-22}).
  \item ONMF\_$\ell_0$: ONMF with $\ell_{0}$-norm constraint.
  \item NMF\_$\ell_{20}$: NMF with $\ell_{2,0}$-norm constraint.
  \item ONMF: Orthogonal NMF \cite{wang2019clustering}.
  \item NMF: Non-negative Matrix factorization \cite{xu2013block}.
  \item NMF\_$\ell_0$: NMF with $\ell_{0}$-norm constraint \cite{bo2014proximal}.
  \item NMF\_$\ell_{c0}$: Column-wise sparse NMF \cite{pock2016inertial}.
  \item Kmeans: A baseline unsupervised method.
  \item Kmeans\_$\ell_1$: Sparse $k$ means with $\ell_1$ penalty in \cite{witten2010framework}.
  \item Kmeans\_$\ell_0$: Sparse $k$ means with $\ell_\infty$/$\ell_0$ penalty in \cite{chang2018sparse}.
\end{itemize}
Wherein the proposed SSNMF methods in this study include
ONMF\_$\ell_{20}$,
ONMF\_$\ell_{20}\_\rho$,
ONMF\_$\ell_{c0}$,
ONMF\_$\ell_0$ and NMF\_$\ell_{20}$.

\subsection{Evaluation metrics}
To evaluate the clustering performance,
we use three metrics including NMI (Normalized Mutual Information) \cite{li2013clustering}, Purity and Entropy \cite{kim2007sparse}.
Suppose $c$ is the number of clustering clusters and $d$ is the number of true categories.
Then NMI is defined as
\begin{equation}\label{equ-26}
  \textrm{NMI} = \frac{\sum_{l=1}^{c} \sum_{h=1}^{d} t_{l,h} \log(\frac{n \cdot t_{l,h}}{t_l \hat{t}_h })}
                 {\sqrt{(\sum_{l=1}^c t_{l} \log\frac{t_{l}}{n})(\sum_{h=1}^{d} \hat{t}_h \log\frac{\hat{t}_h}{n})}},
\end{equation}
Purity is defined as
\begin{equation}\label{equ-27}
  \textrm{Purity} = \frac{1}{n} \sum_{l=1}^c \underset{1 \leq h \leq d}{\max} t_{l,h},
\end{equation}
Entropy is defined as
\begin{equation}\label{equ-28}
  \textrm{Entropy} = \frac{1}{n \log_2 d} \sum_{l=1}^c \sum_{h=1}^{d} t_{l,h} \log_2 \frac{t_{l,h}}{t_l},
\end{equation}
where $n$ is the number of considered samples, $t_l$ is the number of samples from the $l$-th cluster $C_l$, which is obtained by clustering method
and $\hat{t}_h$ is the number of samples from the $h$-th ground truth class $G_h$. $t_{l,h}$ denotes the number of overlapping samples between $C_l$ and $G_h$.
The larger the values of NMI and purity or the smaller the value of entropy, the better the clustering performance.

\subsection{Application to synthetic data}
We generate a synthetic data $\bm{X} \in \mathbb{R}^{p\times n}$ where $p = 500$ features and $n=60$ samples
from three true classes. 
Firstly, the elements in $\bm{X}$ satisfy $X_{ij} \sim N(0,1)$ ($1 \leq i \leq 60$, $1 \leq j \leq 20$),
$X_{ij} \sim N(0,1)$ ($31 \leq i \leq 90$, $21 \leq j \leq 40$)
and $X_{ij} \sim N(0,1)$ ($61 \leq i \leq 120$, $41 \leq j \leq 60$)
and $X_{ij} \sim 0.9*N(0,1)$  for the other $i$ and $j$, 
where $N(0,1)$ denotes standard normal distribution.
Secondly, we set $X_{ij} := |X_{ij}|$ for any $i$ and $j$ to ensure every element in the final synthetic data matrix $\bm{X}$ is positive (see Figure \ref{fig-2}A).

First of all, we show the convergence performance of PALM and maPALM when they are used to solve ONMF\_$\ell_{20}$ model in Eq. (\ref{equ-13}) on the synthetic data (Figure \ref{fig-2}B).
We find that the convergence speed of maPALM is significantly faster than that of PALM.

Secondly, to validate the effectiveness of our proposed methods to perform feature selection, we compare them with state-of-the-art methods on the synthetic data.
The parameters of these compared methods are carefully adjusted to give their best performances and all methods are repeated 10 times using different initial points for comparison.
The corresponding NMI scores are recorded for each method and the methods with higher NMI averages are regarded as more accurate ones. We find that the SSNMF methods including ONMF\_$\ell_{20}$ and NMF\_$\ell_{20}$ outperform other algorithms in terms of NMI (Figure \ref{fig-3}).
Interestingly, we also find that as $\rho$ becomes larger in the Algorithm \ref{alg-3}, the orthogonality level of estimated $\bm{H}$ is better, such that the clustering performance is better (Figure \ref{fig-4}). This result implies that it is possible to improve the clustering accuracy by adding the non-negative and orthogonal constraint in SSNMF model.

\begin{figure}[hbpt]
	\centering \includegraphics[width=1\linewidth]{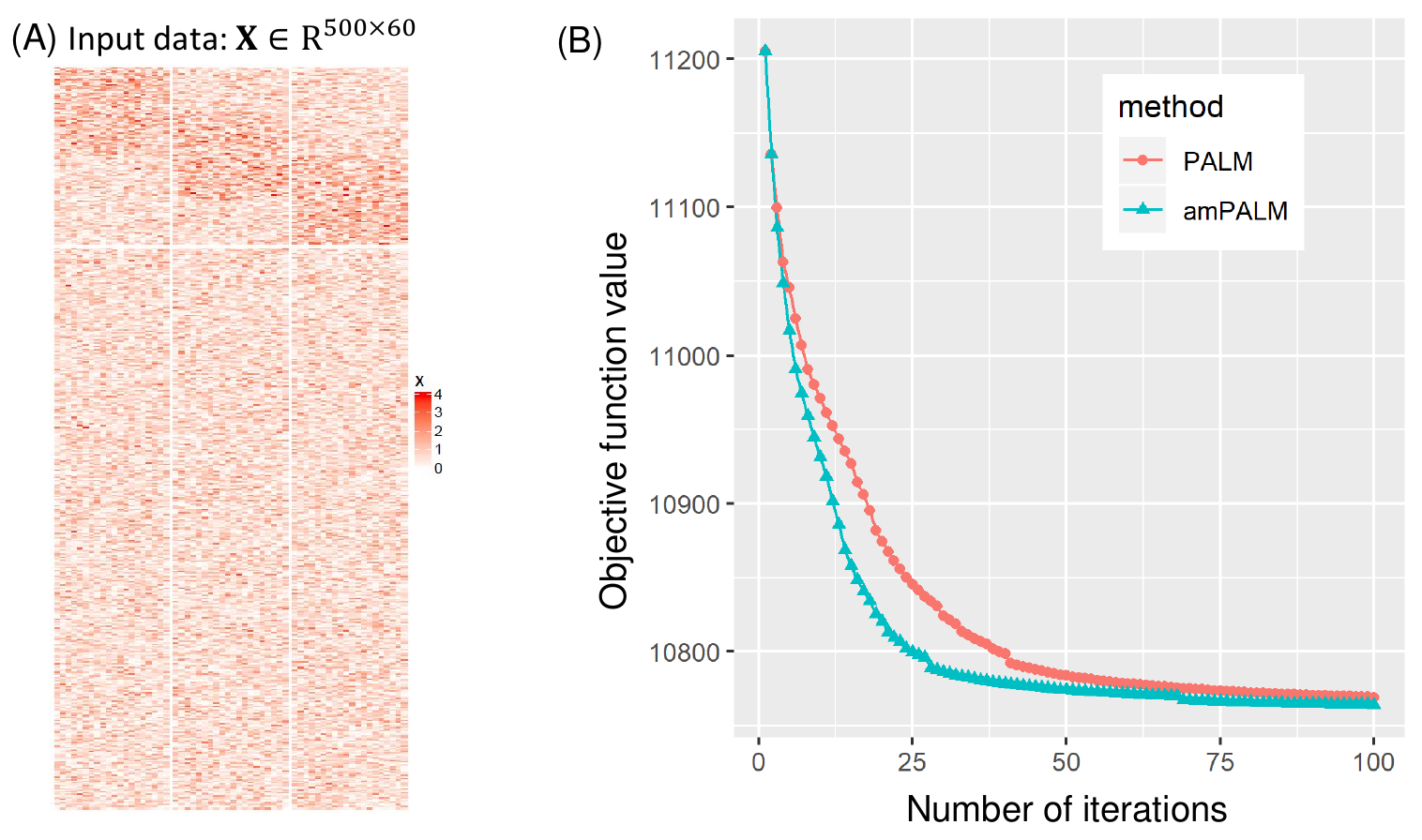}
	\caption{
		Results on the synthetic data.
		(A) Heatmap showing the synthetic data.
		(B) Convergence performance of PALM and maPALM for ONMF\_$\ell_{20}$ with a fixed $\rho=0.5$.
	}\label{fig-2}
\end{figure}

\begin{figure}[hbpt]
	\centering \includegraphics[width=1\linewidth]{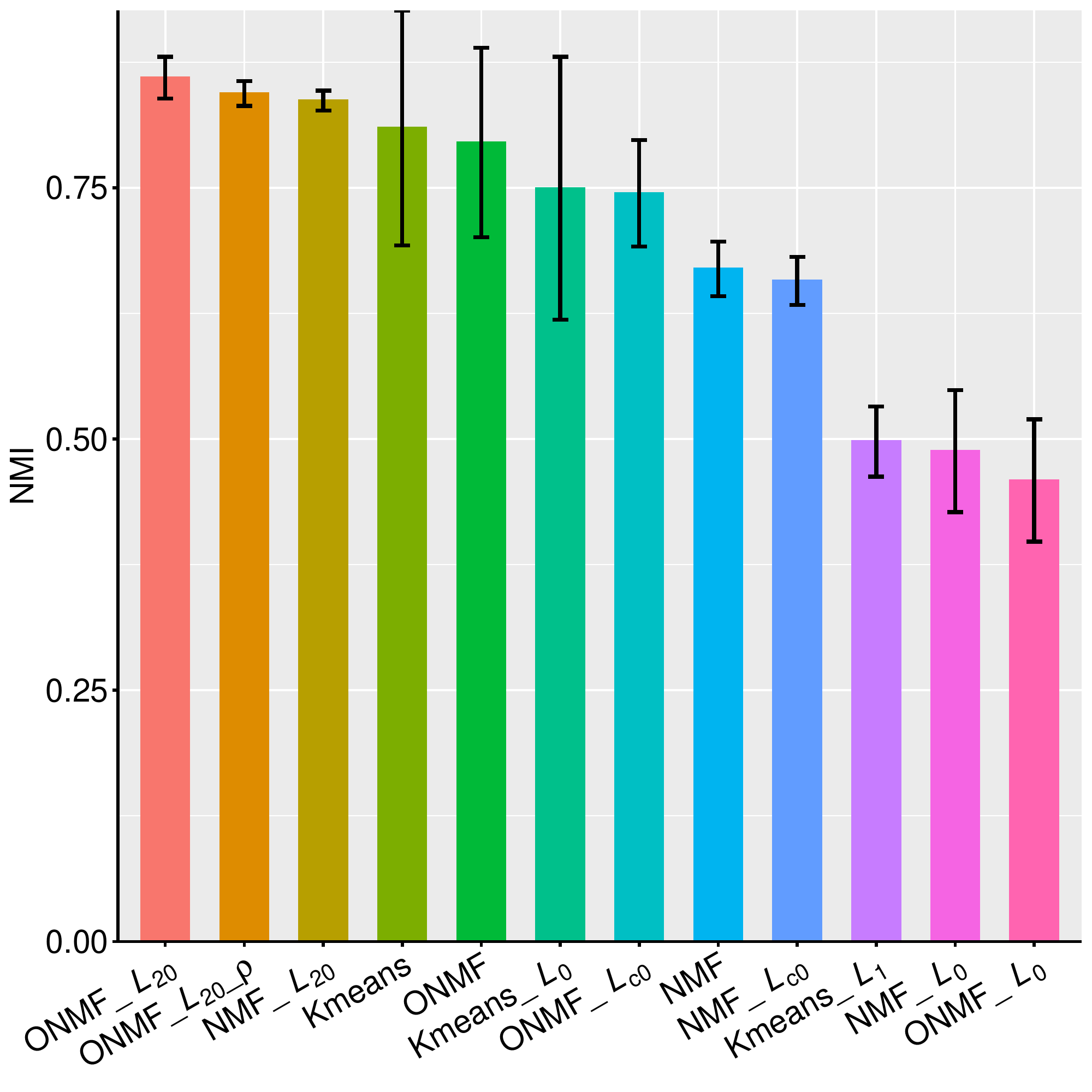}
	\caption{Comparison of 12 unsupervised clustering methods in terms of NMI on the synthetic data.
		Note that we set $\rho=1$ for ONMF\_$\ell_{20}\_\rho$.
	}\label{fig-3}
\end{figure}

\begin{figure}[hbpt]
	\centering \includegraphics[width=1\linewidth]{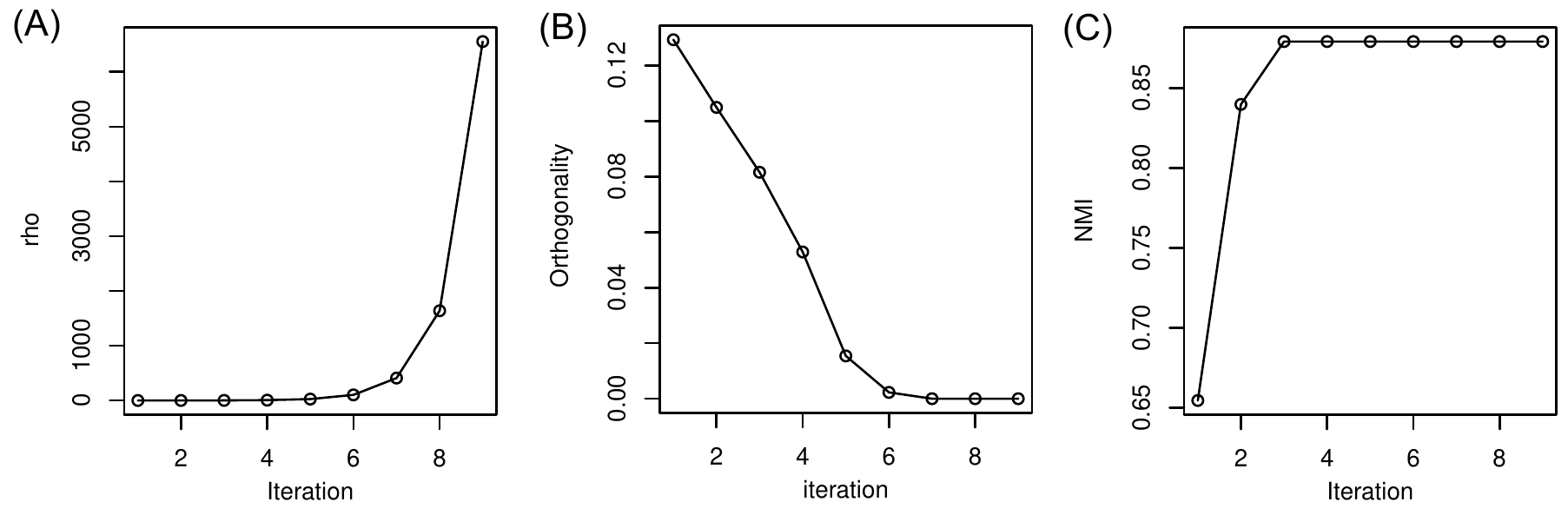}
	\caption{
		Scatter plots showing the change of (A) $\rho$ in Algorithm \ref{alg-3}, (B) orthogonality score of estimated $\bm{H}$ and (C) NMI score.
		For the definition of orthogonality level, please see Eq. 23 of \cite{wang2019clustering}.
	}\label{fig-4}
\end{figure}

Finally, we evaluate whether ONMF can detect outliers by sorting the values of estimated $\bm{H}$.
To this end, we generate a new synthetic data $\bm{X} \in \mathbb{R}^{p\times n}$ where $p = 500$ features and $n=60$ samples from three classes. Firstly, the elements in $\bm{X}$ satisfy
$X_{ij} \sim N(0,1)$ ($1 \leq i \leq 60$, $1 \leq j \leq 20$),
$X_{ij} \sim N(0,1)$ ($31 \leq i \leq 90$, $21 \leq j \leq 40$), and
$X_{ij} \sim 0.9*N(0,1)$  for other $i$ and $j$.
Secondly, we set $X_{ij} := |X_{ij}|$ and obtain the final synthetic data.
Note that the columns from 41 to 60 in $\bm{X}$ correspond to outliers.
We apply ONMF\_$\ell_{20}$ with parameter $k=90$ to the synthetic data.
We find that the outliers (\emph{i.e.}, noise samples) are those with relatively small values in the estimated $\bm{H}$ (Figure \ref{fig-5}). This result implies ONMF\_$\ell_{20}$ can detect these outliers by checking the values of columns in the estimated $\bm{H}$.

\begin{figure}[hbpt]
	\centering \includegraphics[width=1\linewidth]{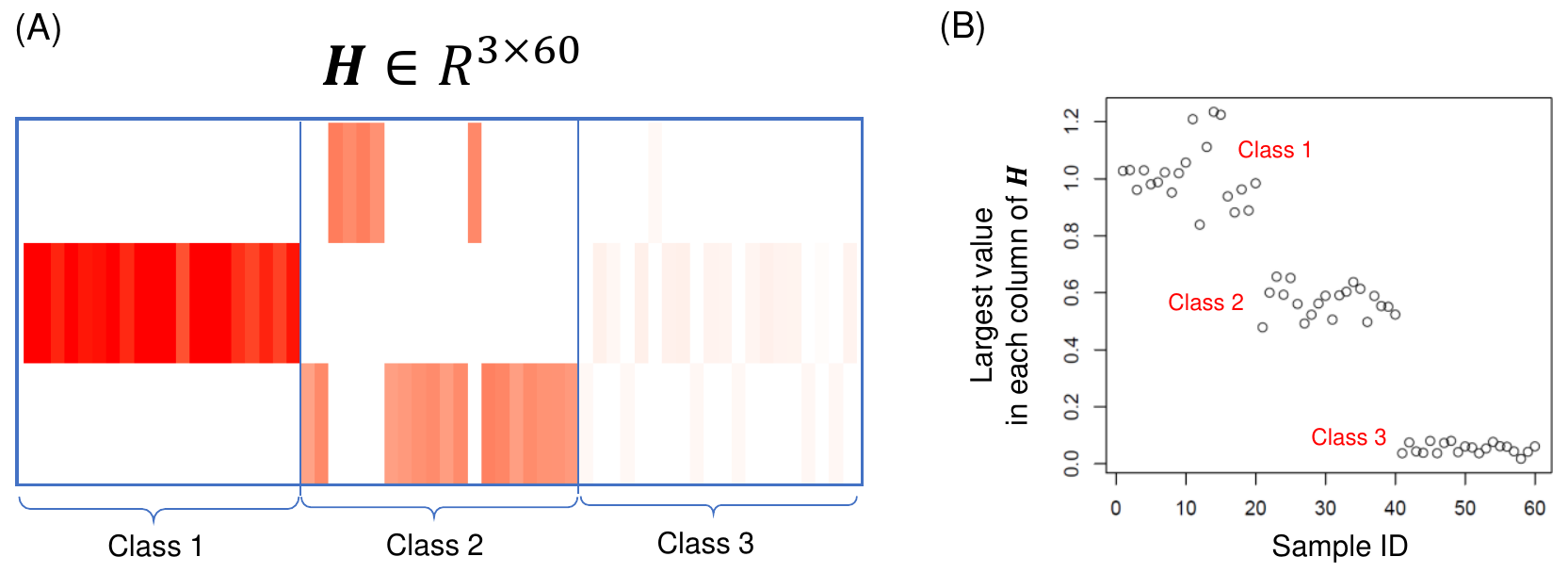}
	\caption{
		(A) Heatmap corresponding to the estimated $\bm{H}$ by ONMF\_$\ell_{20}$ on the second synthetic data.
		(B) Scatter plot showing the largest values from each column of the estimated $\bm{H}$.
	}\label{fig-5}
\end{figure}

\subsection{Application to biological data}
In this study, the proposed methods and other comparison methods are evaluated on three  scRNA-seq datasets as follows:
\begin{itemize}
  \item \textbf{Pollen} dataset \cite{pollen2014low} contains 301 single cells from 11 cell populations which are divided into 4 classes including Blood cells, Neural cells, Dermal or epidermal cells and Pluripotent cells.
  \item \textbf{Camp1} dataset \cite{camp2017multilineage} contains 425 single cells from human liver cells which are divided into 5 populations, named as iPS\_day\_0, De\_day\_6, IH\_day\_14, MH\_day\_21, and HE\_day\_8.
  \item \textbf{Lake} dataset \cite{lake2016neuronal} contains 3042 single cells from human liver cells which are divided into 16 populations named as Ex1, Ex3, Ex4, In6, In1, Ex5, In5, Ex7, In8, Ex2, Ex6, In7, In4, Ex8, In3, In2.
\end{itemize}
For each scRNA-seq dataset, we first filter out these genes which are not expressed in more than 70\% of cells. We then use a logarithmic transformation $x = \log2(x)$ to transform raw expression data. After preprocessing, 8747, 8058 and 5000 genes are retained for Pollen, Camp1 and Lake datasets, respectively.

\begin{table}[htbp]
\centering
\caption{Comparison in terms of (NMI \% $\pm$ std), (Purity \% $\pm$ std) and (Entropy \% $\pm$ std) on the Pollen, Lake and Camp1 datasets.}
%The best results are highlighted in bold.
\begin{adjustbox}{width=1\columnwidth,center}\label{tab-2}
\begin{tabular}{l|l|l|l|l}
  \hline
  \textbf{Pollen data}  & \#gene & NMI$\pm$sd &Purity$\pm$sd  &Entropy$\pm$sd \\
  \hline
  NMF\_$\ell_{20}$ & 2000 & \textbf{85.87 $\pm$ 1.29} & \textbf{91.86 $\pm$ 0.23} & \textbf{19.10 $\pm$ 0.97} \\
  ONMF\_$\ell_{20}\_\rho$ & 2000 & 84.96 $\pm$ 0.86 & 91.69 $\pm$ 0.16 & 19.79 $\pm$ 0.61 \\
  ONMF\_$\ell_{20}$ & 2000 & 80.71 $\pm$ 3.56 & 89.30 $\pm$ 2.76 & 24.34 $\pm$ 4.30 \\
  ONMF\_$\ell_{c0}$ &*  & 81.62 $\pm$ 0.32 & 90.73 $\pm$ 0.11 & 22.28 $\pm$ 0.22 \\
  ONMF\_$\ell_{0}$ &*  & 51.67 $\pm$ 4.58 & 67.08 $\pm$ 5.65 & 27.56 $\pm$ 1.92 \\
  NMF\_$\ell_{0}$ &*  & 54.46 $\pm$ 6.86 & 67.34 $\pm$ 5.78 & 24.97 $\pm$ 5.59 \\
  NMF\_$\ell_{c0}$ &*  & 81.45 $\pm$ 0.88 & 91.89 $\pm$ 0.17 & 21.28 $\pm$ 0.69 \\
  ONMF & all & 83.70 $\pm$ 2.81 & 91.23 $\pm$ 2.52 & 20.52 $\pm$ 4.07 \\
  %NMF & all & 84.95 $\pm$ 0.64 & 91.03 $\pm$ 0.00 & 19.24 $\pm$ 0.34 \\
  Kmeans & all & 71.39 $\pm$ 7.89 & 82.52 $\pm$ 5.27 & 31.37 $\pm$ 5.17 \\
  Kmeans$\_\ell_{0}$ & 2000 & 72.36 $\pm$ 7.30 & 81.40 $\pm$ 8.25 & 29.48 $\pm$ 7.92 \\
  Kmeans$\_\ell_{1}$ & 2039 & 78.09 $\pm$ 9.60 & 88.54 $\pm$ 7.01 & 21.45 $\pm$ 5.43 \\
  \hline
  \textbf{Camp1 data}  & \#gene & NMI$\pm$sd &Purity$\pm$sd  &Entropy$\pm$sd \\
  \hline
  NMF\_$\ell_{20}$ & 2000 & \textbf{89.94} $\pm$ 1.30 & \textbf{96.07 $\pm$ 0.39} & 9.88 $\pm$ 0.88 \\
  ONMF\_$\ell_{20}\_\rho$ & 2000 & 89.32 $\pm$ 3.16 & 92.40 $\pm$ 8.13 & 10.96 $\pm$ 3.68 \\
  ONMF\_$\ell_{20}$ & 2000 & 71.00 $\pm$ 9.25 & 72.12 $\pm$ 5.54 & 29.04 $\pm$ 6.15 \\
  ONMF\_$\ell_{c0}$ &*  & 47.31 $\pm$ 8.06 & 44.35 $\pm$ 5.13 & 10.25 $\pm$ 3.95 \\
  ONMF\_$\ell_{0}$  &*  & 18.57 $\pm$ 18.69 & 34.14 $\pm$ 7.56 & \textbf{5.85 $\pm$ 4.99} \\
  NMF\_$\ell_{0}$   &*  & 12.41 $\pm$ 17.04 & 31.25 $\pm$ 6.72 & 5.94 $\pm$ 6.38 \\
  NMF\_$\ell_{c0}$  &*  & 85.54 $\pm$ 5.20 & 91.67 $\pm$ 2.74 & 17.27 $\pm$ 4.10 \\
  ONMF & all & 78.98 $\pm$ 9.88 & 78.31 $\pm$ 9.89 & 23.19 $\pm$ 8.71 \\
  %NMF & all & 89.29 $\pm$ 2.03 & 96.05 $\pm$ 0.68 & 9.74 $\pm$ 1.42 \\
  Kmeans & all & 78.06 $\pm$ 1.18 & 79.72 $\pm$ 5.76 & 24.09 $\pm$ 4.16 \\
  Kmeans$\_\ell_{0}$ & 2000 & 73.13 $\pm$ 5.36 & 71.62 $\pm$ 7.18 & 28.67 $\pm$ 4.58 \\
  Kmeans$\_\ell_{1}$ & 2005 & 71.96 $\pm$ 1.32 & 79.11 $\pm$ 1.10 & 28.07 $\pm$ 1.33 \\
  \hline
  \textbf{Lake data}  & \#gene & NMI$\pm$sd &Purity$\pm$sd  &Entropy$\pm$sd \\
  \hline
  NMF\_$\ell_{20}$ & 2000 & \textbf{72.43 $\pm$ 2.15} & \textbf{76.73 $\pm$ 2.25} & 28.88 $\pm$ 2.86 \\
  ONMF\_$\ell_{20}$ & 2000 & 57.19 $\pm$ 4.16 & 63.92 $\pm$ 4.30 & 42.67 $\pm$ 5.01 \\
  ONMF\_$\ell_{20}\_\rho$ & 2000 & 71.99 $\pm$ 2.19 & 76.23 $\pm$ 2.49 & 29.60 $\pm$ 2.60 \\
  ONMF\_$\ell_{c0}$ &*  & 10.23 $\pm$ 10.54 & 36.97 $\pm$ 2.42 & 8.33 $\pm$ 8.70 \\
  ONMF\_$\ell_{0}$ &*  & 0.92 $\pm$ 0.64 & 35.01 $\pm$ 0.21 & \textbf{1.53 $\pm$ 1.42} \\
  NMF\_$\ell_{0}$ &*  & 17.08 $\pm$ 14.82 & 41.30 $\pm$ 6.59 & 11.05 $\pm$ 11.93 \\
  NMF\_$\ell_{c0}$ &*  & 66.35 $\pm$ 2.20 & 72.00 $\pm$ 1.95 & 34.61 $\pm$ 3.22 \\
  ONMF & all & 61.44 $\pm$ 3.84 & 67.08 $\pm$ 4.00 & 40.20 $\pm$ 4.43 \\
  %NMF  & all & 72.21 $\pm$ 2.16 & 76.66 $\pm$ 2.27 & 29.10 $\pm$ 2.85 \\
  Kmeans & all & 59.96 $\pm$ 1.31 & 68.43 $\pm$ 1.68 & 42.10 $\pm$ 1.59 \\
  Kmeans$\_\ell_{0}$ & 2000 & 62.05 $\pm$ 2.67 & 68.93 $\pm$ 2.62 & 40.01 $\pm$ 2.91 \\
  Kmeans$\_\ell_{1}$ & 2270 & 70.73 $\pm$ 1.34 & 77.62 $\pm$ 2.12 & 32.32 $\pm$ 1.28 \\
  \hline
\end{tabular}
\end{adjustbox}
\end{table}
\begin{figure*}[hbpt]
	\centering \includegraphics[width=1\linewidth]{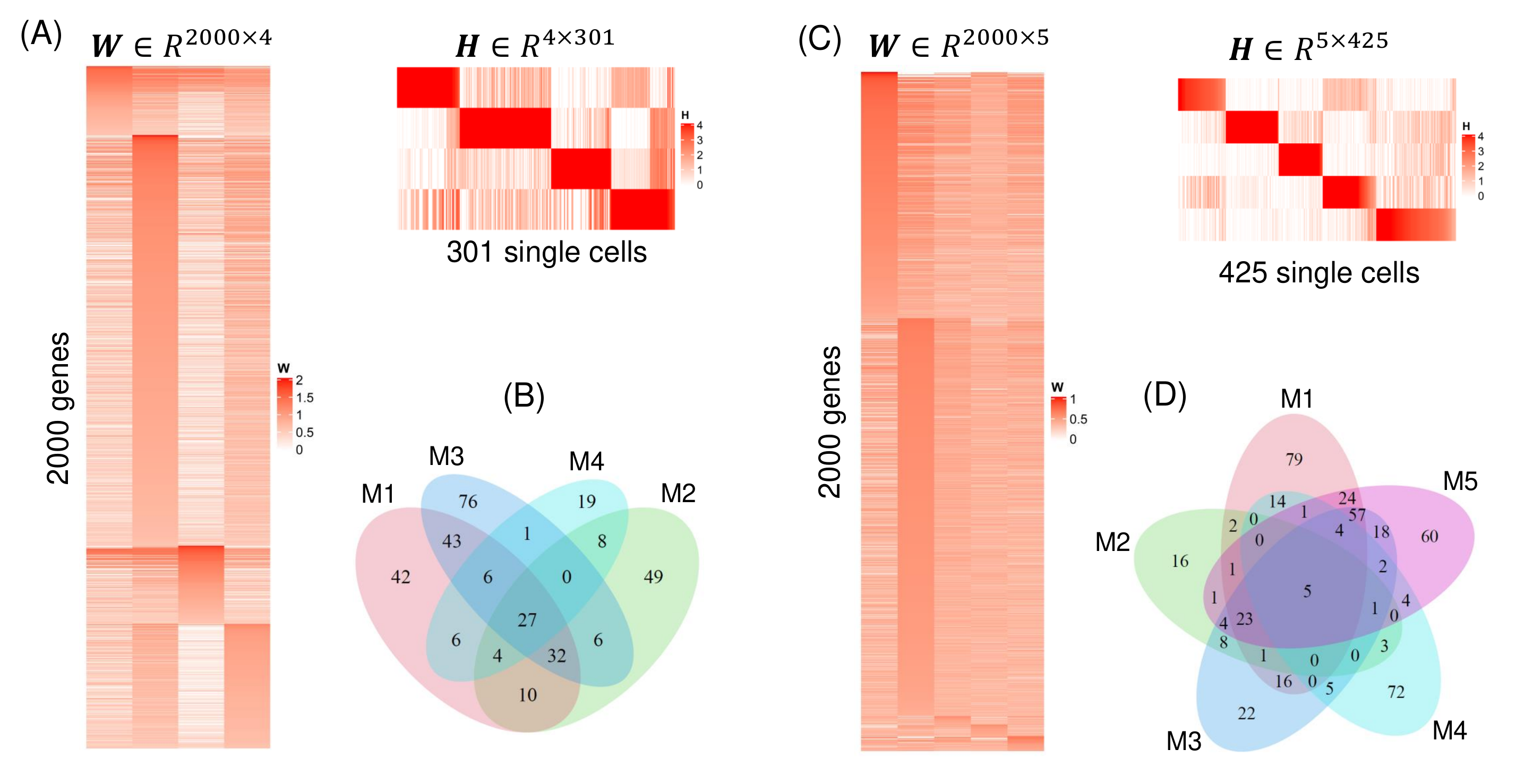}
	\caption{Results are shown in (A) to (D) when NMF\_$\ell_{20}$ is applied to the Pollen and Camp1 scRNA-seq datasets.
    (A) Heatmap showing $\bm{W}$ and $\bm{H}$ obtained from NMF\_$\ell_{20}$ with $k=2000$ on the Pollen dataset where 2000 genes are selected across 301 cells.
    (B) A Venn diagram showing overlap level for the selected genes from four biclusters (M1 to M4) on the Pollen dataset and total of 329 genes are selected.
    (C) Heatmap showing $\bm{W}$ and $\bm{H}$ obtained from NMF\_$\ell_{20}$ with $k=2000$ on the Camp1 dataset where 2000 genes are selected across 425 cells.
    (D) A Venn diagram showing overlap level for the selected genes from five biclusters (M1 to M5) on the Camp1 dataset and total of 443 genes are selected.
	}\label{fig-6}
\end{figure*}

For ONMF\_$\ell_{20}$, we set $k = 2000$ (to extract 2000 genes) which is to ensure that the number of selected genes is about 2000 for further analysis of biological function; rank $r$ equal to the number of true classes of scRNA-seq datasets (herein $r=4$ for the Pollen dataset, $r=5$ for the Camp1 dataset and $r=16$ for the Lake dataset); $\rho = 0.1$ and $\gamma = 1.5$ which are for updating next $\rho := \gamma \rho$ (herein $\rho$ is updated up to 10 times); and $\epsilon = 1e-3$ in Algorithm \ref{alg-3}. For fairness of comparison, we ensure these estimated $\bm{W}$ of all sparse learning methods including NMF\_$\ell_{20}$, ONMF\_$\ell_{c0}$, NMF\_$\ell_{c0}$, ONMF\_$\ell_{0}$, NMF\_$\ell_{0}$ and Kmeans$\_\ell_{0}$ have the same sparsity level, and all methods are repeated 10 times using different initial points for comparison.

We evaluate the clustering performance of all methods in terms of NMI, Purity and Entropy. The detailed results on the pollen, camp1 and lake datasets are summarized in Table \ref{tab-2}. These results show that the proposed SSNMF methods, especially NMF\_$\ell_{20}$, outperform other methods. The use of $\ell_{2,0}$-norm enables some SSNMF methods to select some important features by checking the non-zero rows of $\bm{W}$. Finally, we also discuss the influence of $k$-choice on the clustering performance for these proposed SSNMF with $\ell_{2,0}$-norm methods and the results show that the proposed NMF\_$\ell_{20}$  outperforms Kmeans\_$\ell_{0}$ in different situations (Table \ref{tab-3}).

\begin{table}[ht]
	\centering
	\caption{Comparison of (NMI \%) averages with different number of genes on the Pollen, Camp1 and Lake datasets.}\label{tab-3}
	\begin{tabular}{lllllll}
		\hline
		\textbf{Pollen data} ($k$=)& 500 & 1000 & 2000 & 3000 & 4000 & 5000 \\
		\hline
		
		NMF\_$\ell_{20}$        & 81.77 & 84.46 & 85.87 & 85.70 & 85.04 & 85.04 \\
		ONMF\_$\ell_{20}$       & 69.36 & 73.06 & 80.71 & 84.27 & 84.65 & 84.34 \\
		ONMF\_$\ell_{20}\_\rho$ & 82.35 & 84.09 & 84.96 & 86.02 & 85.21 & 85.04 \\
		Kmeans$\_\ell_{0}$      & 80.79 & 76.98 & 72.36 & 71.04 & 71.57 & 69.49 \\
		
		\hline
		\textbf{Camp1 data} ($k$=)& 500 & 1000 & 2000 & 3000 & 4000 & 5000 \\
		\hline
		
		NMF\_$\ell_{20}$        & 91.56 & 91.84 & 89.94 & 90.11 & 90.28 & 90.28 \\
     	ONMF\_$\ell_{20}$       & 62.57 & 63.93 & 71.00 & 72.54 & 77.34 & 80.77 \\
		ONMF\_$\ell_{20}\_\rho$ & 90.59 & 90.80 & 89.32 & 89.21 & 89.24 & 89.03 \\
		Kmeans$\_\ell_{0}$      & 74.26 & 73.41 & 73.13 & 76.28 & 75.31 & 74.23 \\
		
		\hline
	    \textbf{Lake data} ($k$=)& 500 & 1000 & 2000 & 3000 & 4000 & 5000 \\
		\hline
		NMF\_$\ell_{20}$        & 72.96 & 72.88 & 72.43 & 72.34 & 72.29 & 72.23 \\
		ONMF\_$\ell_{20}$       & 52.94 & 55.67 & 57.19 & 58.64 & 59.95 & 61.44 \\
		ONMF\_$\ell_{20}\_\rho$ & 72.48 & 72.37 & 71.99 & 71.67 & 71.55 & 71.52 \\
		Kmeans$\_\ell_{0}$      & 70.83 & 69.25 & 62.05 & 58.83 & 56.17 & 55.65 \\
		\hline
	\end{tabular}
\end{table}

\subsection{Biological analysis}
In this section, we show that the SSNMF with $\ell_{2,0}$-norm methods can be used for the identification of subpopulation and gene selection for scRNA-seq data. Based on the experiment results in the previous section, the NMF\_$\ell_{20}$ achieved the best performance.
Therefore, we select the computing results from NMF\_$\ell_{20}$ ($k=2000$) on Pollen and Camp1 datasets as a example for further biological analysis. Some of the significant gene and sample expression patterns can be identified based on the output $\bm{W}\in R^{p\times r}$ and $\bm{H}\in R^{r\times n}$ from NMF\_$\ell_{20}$.

To be simplify, a biological functional bicluster is defined as a gene subset with a sample subset (also as cell subset). For each pair of $\bm{w}_i$ (in $\bm{W}$) and $\bm{h}^i$ (in $\bm{H}$), a bicluster is extracted based on the following computational steps.
\begin{itemize}
	\item Step 1: For the $i$-th column of $\bm{W}$ ($\bm{w}_i$), the higher numerical values, the more important the corresponding genes are. We do a z-score normalization for the $\bm{w}_i$ using the formula $\bm{z}_i = (\bm{w}_i-mean(\bm{w}_i))/sd(\bm{w}_i)$. The genes with the corresponding coefficient values of $\bm{w}_i$, whose z-scores are larger than a given threshold $T$, are extracted as the bicluster gene subset.
	
	\item Step 2: For $i$-th row of $\bm{H}$ ($\bm{h}^i$), we screen the cells with the largest coefficient in their corresponding column, i.e., 
	$\{j|w_{ij} \geq w_{tj}~\mbox{for}~j=1,\cdots,n,~\forall~t\}$. Thus, we obtain a single cell set for the bicluster.
\end{itemize}

For the output $\bm{W}$ and $\bm{H}$ from Pollen dataset, we first calculate z-scores normalization for each column of $\bm{W}$ and then rank all genes according to their z-socre values.
These genes with z-score more than the threshold $T = 1.5$ are regarded as the gene set of biclusters.
We extract four biclusters with total 329 genes (Figure \ref{fig-6}A and B).
Bicluster 1 contains 170 genes and 68 cells. The selected cells are all Blood cells.
Bicluster 2 contains 136 genes and 99 cells and all selected cells are Dermal/Epidermal cells.
Bicluster 3 contains 191 genes and 65 cells and all selected cells are Neural cells.
Bicluster 4 contains 71 genes and 69 cells while 45 of 69 cells are Blood cells and 24 of 69 are Pluripotent cells.
To be interesting, some genes are shared in multiple biclusters. 27 genes are shared on all four biclusters. It shows that the genes may be hub genes and play a joint role in multiple biological functions (pathways).

To demonstrate the biological function of the gene sets from these identified biclusters, 
we perform the gene function enrichment analysis. 
We retrieve the KEGG pathways data information from Molecular Signatures Database (MSigDB, http://www.gsea-msigdb.org/gsea/msigdb/index.jsp). KEGG pathways are a class of collection of manually drawn pathway maps representing the biological knowledge of the molecular interaction, reaction and relation networks. Each KEGG pathway is consisted by a set of functionally gene set which expresses complex regulatory mechanism among different genes. The hypergeometric test is applied for the biological statistical analysis. A class of critical KEGG signaling pathway is significantly enriched for the identified four bicluster gene sets with the Benjamini-Hochberg adjusted $p < 0.05$.

As expected, most of the enriched KEGG signaling pathways are related on the developing cerebral cortex which is highly consistent to the  Pollen dataset from the diverse neural cell types \cite{pollen2014low}.
Interestingly, we find that multiple bicluster gene sets are enriched on the ribosome pathway which is the cell factories responsible for making proteins and the ribosome pathway has been reported to play an important role in brain development \cite{chau2018downregulation}.
In addition, some brain disease-related pathways have been discovered.
For example, 
bicluster 1 is enriched in the pathways like Parkinson's disease ($p=$7.5e-14) and Alzheimer's disease ($p=$3.9e-11).
Bicluster 2 is enriched in some KEGG pathways like proteasome ($p=$1.2e-3), leukocyte transendothelial migration ($p=$1.2e-3), regulation of actin cytoskeleton ($p=$1.2e-3), and focal adhesion ($p=$1.4e-3).

For the Camp1 dataset, five biclusters are extracted from the $\bm{W}$ and $\bm{H}$ (Figure \ref{fig-6}C and D).
Similarly, we calculate z-scores normalization for each column of $\bm{W}$ and then rank all genes according to their z-socre values.
These genes with z-score more than the threshold $T = 1.2$ are regarded as the gene set of biclusters.
Bicluster 1 contains 227 genes and 73 cells where all selected cells are MH day 21 cells, 
bicluster 2 contains 65 genes and 81 cells where 80 of 81 cells are iPS day 0 cells, 
bicluster 3 contains 166 genes and 67 cells where all selected cells are De day 6 cells, 
bicluster 4 contains 111 genes and 82 cells where 77 of 82 cells are IH day 14 cells and 5 of 82 are MH day 21 cells, and 
bicluster 5 contains 205 genes and 122 cells where 113 of 122 cells are HE day 8 cells, 3 of 122 are De day 6 cells, 4 of 122 are IH day 14 cells and 2 of 122 are MH day 21 cells.
Similar to the results on Pollen dataset, the cells in each identified bicluster are highly pure of population. Meanwhile, each bicluster possesses different domain genes while a small number of genes overlap each other (see Figure \ref{fig-6}D).
Due to these extracted single cells in the Camp1 dataset are from human liver \cite{camp2017multilineage}, the biological function analysis show that the identified bicluster gene sets are significantly enriched a series of KEGG pathways (Benjamini-Hochberg adjusted $p < 0.05$) which are highly related to biological processes associated with liver.
For example, bicluster 1 is significantly enriched in the oxidative phosphorylation pathway ($p = 7.6e-13$ and there are 26 genes in this pathway) which have been reported to be related to liver \cite{santacatterina2016down}.
Bicluster 2 is significantly enriched in the glycolysis gluconeogenesis pathway ($p = 6.2e-5$ and there are 6 genes in this pathway) which have been reported to be related to liver \cite{ma2013switch}.
All these results show that the proposed method can be used for single cell type discovery, gene selection and biological process analysis.

\section{Conclusion}
In this paper, we present a class of SSNMF models with $\ell_{2,0}$-norm constraint.
We prove that $\ell_{2,0}$-norm satisfies the K\L~property,
such that the PALM algorithm can be used to solve a class of non-convex and non-smooth optimization problems with $\ell_{2,0}$-norm constraint.
Especially, we first introduce the NMF\_$\ell_{20}$ model which integrates feature selection in the NMF model. To improve the convergence rate of PALM, we further develop an accelerated version of PALM (maPALM) to solve NMF\_$\ell_{20}$. We also prove the convergence of proposed algorithms (PALM and maPALM) when they are used to solve NMF\_$\ell_{20}$.
To integrate feature selection and non-negative orthogonal constraint in the NMF model, we furthermore introduce the ONMF\_$\ell_{20}$ model.
We develop an efficient algorithm to solve it by using a penalty function method.
Briefly, the algorithm converts ONMF\_$\ell_{20}$ into a series of constrained and penalized NMF problems which can be solved by the PALM and maPALM algorithms.
Finally, we compare these proposed SSNMF methods with other methods for clustering task on the synthetic and scRNA-seq data. The results show that the proposed SSNMF methods can be used not only for clustering (single cell type discovery), but also for gene selection and biological function analysis.

\appendices
\section{Definitions and Proofs}\label{appendix-a}

\subsection{Mathematical definitions for non-convex analysis}\label{appendix-a1}
We introduce some mathematical definitions which are used in this study for non-convex analysis \cite{liu2019con,bo2014proximal,bao2015dictionary}.
% --------------------------------------------------------------
\begin{defn}
(Proper) $f(\bm{x})$ is proper if $\mathbf{dom}(f) := \{\bm{x} \in \mathbb{R}^n: f(\bm{x})<+\infty \}$ is nonempty and $f(\bm{x})> -\infty$.
\end{defn}
% --------------------------------------------------------------
\begin{defn}
(Lower semi-continuous) $f(\bm{x})$ is lower semi-continuous if $\liminf\limits_{\bm{x}\rightarrow \bm{x}_0} f(\bm{x}) \geq f(\bm{x}_0)$ at any point $\bm{x}_0 \in \mathbf{dom}(f)$.
\end{defn}
% --------------------------------------------------------------
\begin{defn}
(Coercive Function) $f(\bm{x})$ is called coercive if $f(\bm{x})$ is bounded from below and $f(\bm{x}) \rightarrow \infty$ if $\|\bm{x}\| \rightarrow \infty$.
\end{defn}
% --------------------------------------------------------------
\begin{defn}
 (Lipschitz smooth) $f(\bm{x})$ is Lipschitz smooth if it is differentiable and there exists $L>0$ and such that
$$\|\nabla f(\bm{x})  - \nabla f(\bm{y})\| \leq L \|\bm{x} - \bm{y}\|, \forall \bm{x}, \bm{y} \in \mathbb{R}^n.$$
Any such $L$ is considered to as a Lipschitz constant for $f(\bm{x})$.
\end{defn}
% --------------------------------------------------------------
\begin{defn}
(Subdifferential)
Let $f: \mathbb{R}^n \rightarrow (-\infty, \infty]$ be a proper and lower semi-continuous function. Then the Frecht sub-differential
of $f$, denoted as $\widehat{\partial} f$, at point $\bm{x} \in \mathbf{dom}(f)$ is the set of all vectors $\bm{z}$ which satisfies
$$\liminf\limits_{\bm{x}\neq \bm{y}, \bm{y} \rightarrow \bm{x}} \frac{f(\bm{y})-f(\bm{x}) - \langle \bm{z}, \bm{y}-\bm{x} \rangle}{\|\bm{y}-\bm{x} \|} \geq 0,$$
where $\langle\cdot,\cdot\rangle$ denotes the inner product.
Then the limiting Frecht sub-differential, or simply the sub-differential, denoted as $\partial f$, at $\bm{x} \in \mathbf{dom}(f)$ is the following closure of $\widehat{\partial} f$:
$$\{ \bm{z} \in \mathbb{R}^n: \exists (\bm{x}^k, g(\bm{x}^k)) \rightarrow (\bm{x}, f(\bm{x})) \},$$
where $\bm{z}^k \in \widehat{\partial} f(\bm{x}^k) \rightarrow \bm{z}$ when $k \rightarrow \infty$.
\end{defn}
% --------------------------------------------------------------
\begin{defn}
(Critical Point) A point $\bm{x}$ is called a critical point of function $f$ if $0 \in \partial f(\bm{x})$.
\end{defn}

\begin{defn}
(Semi-algebraic set and function)
A subset $\Omega$ of $\mathbb{R}^n$ is a real semi-algebraic set if there exist a finite number of real polynomial functions $r_{ij}, h_{ij}: \mathbb{R}^n \rightarrow \mathbb{R}$ such that
$$\Omega = \bigcup_{j=1}^{p} \bigcap_{i=1}^{q} \big\{ \bm{x}\in \mathbb{R}^n: r_{ij}(\bm{x})=0~ \mbox{and}~h_{ij}(\bm{x})<0 \big\}.$$
Function $f$ is called semi-algebraic function if its graph
$\{(\bm{x},z) \in \mathbb{R}^{n+1}: f(\bm{x})=z\}$ is a semi-algebraic subset of $\mathbb{R}^{n+1}$.
\end{defn}

\begin{defn}\label{defn-12}
(Kurdyka-\L{ojasiewicz} property and function)
Function $f$ has the Kurdyka-\L{ojasiewicz} (K\L) property at $\bar{\bm{x}} \in \mathbf{dom}(\partial f) := \{ \bm{x} \in \mathbb{R}^{n}: \partial f(\bm{x}) \neq \emptyset \}$
if there exist $\eta \in (0, \infty]$, a neighborhood $U$ of $\bar{\bm{x}}$ and a function
$\phi: [0, \eta) \rightarrow \mathbb{R}_+$ which satisfies (1) $\phi$ is continuous
at 0 and $\phi(0)=0$; (2) $\phi$ is concave and $C^1$ on $(0, \eta)$; (3) for all $s \in (0, \eta): \phi'(s)>0$, such that for all
$\bm{x} \in U \cap [f(\bar{\bm{x}}) < f(\bm{x}) < f(\bar{\bm{x}})+\eta],$
the following inequality holds
$$ \phi'(f(\bm{x}) - f(\bar{\bm{x}})) dist(0, \partial f(\bm{x}))\geq 1.$$
Function $f$ is called a K\L~function if $f$ satisfies the K\L~property at each point of $\mathbf{dom}(\partial f)$.
Moreover, Theorem 3 in \cite{bo2014proximal} shows that all proper, lower semicontinuous and semi-algebraic functions satisfy K\L~property property.
\end{defn}

\subsection{Proof of Proposition \ref{prop-1}}\label{appendix-a2}
\begin{proof}
Reference \cite{wang2019clustering}, we give the following proof.
Let $F_{\rho}(\bm{W},\bm{H}) =  \frac{1}{2} \|\bm{X} - \bm{W}\bm{H}\|_F^2 + \frac{\rho}{2}\sum_{j=1}^n \Big((\bm{1}^T\bm{h}_j)^2 - \|\bm{h}_j\|_2^2\Big)$, $\bm{Z}^* := (\bm{W}^*,\bm{H}^*)$ is a local minimizer of  (\ref{equ-10}). If $\Omega_1$ is the feasible set of (\ref{equ-7}), then there exist a neighborhood of $\bm{Z}^*$, $N_\epsilon(\bm{Z}^*)=\{\bm{Z}|\| \bm{Z} - \bm{Z}^* | \}$ where $\epsilon>0$ and it satisfies $F_\rho(\bm{Z}^*) \leq F_\rho(\bm{Z})$ for any $\bm{Z} \in N_\epsilon(\bm{Z}^*) \cap \Omega$.
Let $\phi(\bm{h}_j) = (\bm{1}^T\bm{h}_j)^2 - \|\bm{h}_j\|_2^2$, we note that $\phi(\bm{h}_j)>0$ if $\bm{h}_j$ has at least two non-zeros entries and $\phi(\alpha \bm{h}_j)$ is an increasing function with respect to $\alpha$. Suppose that there exists an $j'$ and $\bm{h}_{j'}^*$ is infeasible to (\ref{equ-7}) and thus $\phi(\bm{h}_{j'}^*)>0$.
Then, for a scalar $\alpha\in(0,1)$ and $\bm{Z}_\alpha := (\bm{W}^*/\alpha, \alpha\bm{H}^*)$ is a feasible point to (\ref{equ-10}).
Specifically, since $\|\bm{Z}_\alpha -\bm{Z}^*\| = (1/\alpha - 1)^2\|\bm{W}^*\|_F^2 + (\alpha - 1)^2 \|\bm{H}^*\|_F^2 \leq \max\{(1/\alpha - 1)^2, (\alpha - 1)^2\} \|\bm{Z}^*\|_F^2$.
To have $\bm{Z}_\alpha \in N_\epsilon(\bm{Z}^*)$, it is sufficient to let
$\max\{(1/\beta - 1)^2, (\beta - 1)^2\} < \alpha < 1$
where $\beta = \sqrt{\epsilon/(2*\|\bm{Z}^*\|_F^2)}$.
Thus, we have $F_\rho(\bm{Z}_\alpha) -  F_\rho(\bm{Z}^*) = \phi(\alpha \bm{h}_{j'}^*) - \phi(\bm{h}_{j'}^*) < 0.$
This is a contradiction. So, we have proved that $(\bm{W}^*,\bm{H}^*)$  is a feasible solution of (\ref{equ-7}).
Next, will prove that $(\bm{W}^*,\bm{H}^*)$ is also a local minimizer of (\ref{equ-7}).
Let $\Omega_2$ as the feasible set of (\ref{equ-7}), then we have $\{N_\epsilon(\bm{Z}^*) \cap \Omega_2\} \subseteq \{N_\epsilon(\bm{Z}^*) \cap \Omega_1\}$.
This implies that $G_\rho(\bm{Z}^*) = F_\rho(\bm{Z}^*) \leq  F_\rho(\bm{Z}) = G_\rho(\bm{Z})$ for any $\bm{Z} \in N_\epsilon(\bm{Z}^*) \cap \Omega_2$ where $G_{\rho}(\bm{W},\bm{H}) =  \frac{1}{2} \|\bm{X} - \bm{W}\bm{H}\|_F^2$. So, we prove that $(\bm{W}^*,\bm{H}^*)$ is a local minimizer of (\ref{equ-7}).
\end{proof}

\section*{Acknowledgment}
This work was supported by 
Key-Area Research and Development Program of Guangdong Province [2020B0101350001], and
the National Science Foundation of China [61272274],
and Natural Science Foundation of Jiangxi Province of China [20192BAB217004] and China Postdoctoral Science Foundation [2020M671902].

\balance
\small{
\bibliographystyle{IEEEtran}
\bibliography{MYREF}

% Generated by IEEEtran.bst, version: 1.12 (2007/01/11)
\begin{thebibliography}{10}
\providecommand{\url}[1]{#1}
\csname url@samestyle\endcsname
\providecommand{\newblock}{\relax}
\providecommand{\bibinfo}[2]{#2}
\providecommand{\BIBentrySTDinterwordspacing}{\spaceskip=0pt\relax}
\providecommand{\BIBentryALTinterwordstretchfactor}{4}
\providecommand{\BIBentryALTinterwordspacing}{\spaceskip=\fontdimen2\font plus
\BIBentryALTinterwordstretchfactor\fontdimen3\font minus
  \fontdimen4\font\relax}
\providecommand{\BIBforeignlanguage}[2]{{%
\expandafter\ifx\csname l@#1\endcsname\relax
\typeout{** WARNING: IEEEtran.bst: No hyphenation pattern has been}%
\typeout{** loaded for the language `#1'. Using the pattern for}%
\typeout{** the default language instead.}%
\else
\language=\csname l@#1\endcsname
\fi
#2}}
\providecommand{\BIBdecl}{\relax}
\BIBdecl

\bibitem{luecken2019current}
M.~D. Luecken and F.~J. Theis, ``Current best practices in single-cell rna-seq
  analysis: a tutorial,'' \emph{Mol. Syst. Biol.}, vol.~15, no.~6, p. e8746,
  2019.

\bibitem{kiselev2019challenges}
V.~Y. Kiselev, T.~S. Andrews, and M.~Hemberg, ``Challenges in unsupervised
  clustering of single-cell {RNA}-seq data,'' \emph{Nat. Rev. Genet.}, vol.~20,
  no.~5, pp. 273--282, 2019.

\bibitem{brunet2004metagenes}
J.-P. Brunet, P.~Tamayo, T.~R. Golub, and J.~P. Mesirov, ``Metagenes and
  molecular pattern discovery using matrix factorization,'' \emph{Proc. Natl.
  Acad. Sci.}, vol. 101, no.~12, pp. 4164--4169, 2004.

\bibitem{stravzar2016orthogonal}
M.~Stra{\v{z}}ar, M.~{\v{Z}}itnik, B.~Zupan, J.~Ule, and T.~Curk, ``Orthogonal
  matrix factorization enables integrative analysis of multiple rna binding
  proteins,'' \emph{Bioinformatics}, vol.~32, no.~10, pp. 1527--1535, 2016.

\bibitem{liu2017regularized}
J.-X. Liu, D.~Wang, Y.-L. Gao, C.-H. Zheng, Y.~Xu, and J.~Yu, ``Regularized
  non-negative matrix factorization for identifying differentially expressed
  genes and clustering samples: a survey,'' \emph{IEEE/ACM Trans. Comput. Biol.
  Bioinform.}, vol.~15, no.~3, pp. 974--987, 2017.

\bibitem{zhang2012discovery}
S.~Zhang, C.-C. Liu, W.~Li, H.~Shen, P.~W. Laird, and X.~J. Zhou, ``Discovery
  of multi-dimensional modules by integrative analysis of cancer genomic
  data,'' \emph{Nucleic Acids Res.}, vol.~40, no.~19, pp. 9379--9391, 2012.

\bibitem{chen2018discovery}
J.~Chen and S.~Zhang, ``Discovery of two-level modular organization from
  matched genomic data via joint matrix tri-factorization,'' \emph{Nucleic
  Acids Res.}, vol.~46, no.~12, pp. 5967--5976, 2018.

\bibitem{zhang2019learning}
L.~Zhang and S.~Zhang, ``Learning common and specific patterns from data of
  multiple interrelated biological scenarios with matrix factorization,''
  \emph{Nucleic Acids Res.}, vol.~47, no.~13, pp. 6606--6617, 2019.

\bibitem{fu2019nonnegative}
X.~Fu, K.~Huang, N.~D. Sidiropoulos, and W.-K. Ma, ``Nonnegative matrix
  factorization for signal and data analytics: Identifiability, algorithms, and
  applications.'' \emph{IEEE Signal Process. Mag.}, vol.~36, no.~2, pp. 59--80,
  2019.

\bibitem{shao2017robust}
C.~Shao and T.~H{\"o}fer, ``Robust classification of single-cell transcriptome
  data by nonnegative matrix factorization,'' \emph{Bioinformatics}, vol.~33,
  no.~2, pp. 235--242, 2017.

\bibitem{duren2018integrative}
Z.~Duren, X.~Chen, M.~Zamanighomi, W.~Zeng, A.~T. Satpathy, H.~Y. Chang,
  Y.~Wang, and W.~H. Wong, ``Integrative analysis of single-cell genomics data
  by coupled nonnegative matrix factorizations,'' \emph{Proc. Natl. Acad.
  Sci.}, vol. 115, no.~30, pp. 7723--7728, 2018.

\bibitem{welch2019single}
J.~D. Welch, V.~Kozareva, A.~Ferreira, C.~Vanderburg, C.~Martin, and E.~Z.
  Macosko, ``Single-cell multi-omic integration compares and contrasts features
  of brain cell identity,'' \emph{Cell}, vol. 177, no.~7, pp. 1873--1887, 2019.

\bibitem{hoyer2004non}
P.~O. Hoyer, ``Non-negative matrix factorization with sparseness constraints,''
  \emph{J. Mach. Learn. Res.}, vol.~5, no. Nov, pp. 1457--1469, 2004.

\bibitem{kim2007sparse}
H.~Kim and H.~Park, ``Sparse non-negative matrix factorizations via alternating
  non-negativity-constrained least squares for microarray data analysis,''
  \emph{Bioinformatics}, vol.~23, no.~12, pp. 1495--1502, 2007.

\bibitem{peharz2012sparse}
R.~Peharz and F.~Pernkopf, ``Sparse nonnegative matrix factorization with
  $\ell_0$-constraints,'' \emph{Neurocomputing}, vol.~80, pp. 38--46, 2012.

\bibitem{li2017feature}
J.~Li, K.~Cheng, S.~Wang, F.~Morstatter, R.~P. Trevino, J.~Tang, and H.~Liu,
  ``Feature selection: A data perspective,'' \emph{ACM Comput. Surv.}, vol.~50,
  no.~6, pp. 1--45, 2017.

\bibitem{gui2017feature}
J.~Gui, Z.~Sun, S.~Ji, D.~Tao, and T.~Tan, ``Feature selection based on
  structured sparsity: A comprehensive study,'' \emph{IEEE Trans. Neural Netw.
  Learn. Syst.}, vol.~28, no.~7, pp. 1490--1507, 2017.

\bibitem{nie2010efficient}
F.~Nie, H.~Huang, X.~Cai, and C.~H. Ding, ``Efficient and robust feature
  selection via joint $\ell_{21}$-norms minimization,'' in \emph{Adv. Neural.
  Inf. Process. Syst.}, 2010, pp. 1813--1821.

\bibitem{Towards13}
H.~Huang, C.~Ding, and D.~Luo, ``Towards structural sparsity: An explicit
  $\ell_2$/$\ell_0$ approach,'' in \emph{IEEE 10th Int. Conf. Data Mining},
  2010, pp. 344--353.

\bibitem{pang2019efficient}
T.~Pang, F.~Nie, J.~Han, and X.~Li, ``Efficient feature selection via
  $\ell_{2,0}$-norm constrained sparse regression,'' \emph{IEEE Trans. Knowl.
  Data Eng.}, vol.~31, no.~5, pp. 880--893, 2019.

\bibitem{du2018exploiting}
X.~Du, F.~Nie, W.~Wang, Y.~Yang, and X.~Zhou, ``Exploiting combination effect
  for unsupervised feature selection by $\ell_{20}$ norm,'' \emph{IEEE Trans.
  Neural Netw. Learn. Syst.}, vol.~30, no.~1, pp. 201--214, 2018.

\bibitem{bo2014proximal}
J.~Bolte, S.~Sabach, and M.~Teboulle, ``Proximal alternating linearized
  minimization for nonconvex and nonsmooth problems,'' \emph{Math. Program.},
  vol. 146, no. 1-2, pp. 459--494, 2014.

\bibitem{ding2006orthogonal}
C.~Ding, T.~Li, W.~Peng, and H.~Park, ``Orthogonal nonnegative matrix
  t-factorizations for clustering,'' in \emph{Proc. 12th ACM Int. Conf. Knowl.
  Discovery Data Mining}, 2006, pp. 126--135.

\bibitem{zhang2019greedy}
K.~Zhang, S.~Zhang, J.~Liu, J.~Wang, and J.~Zhang, ``Greedy orthogonal pivoting
  algorithm for non-negative matrix factorization,'' in \emph{Int. Conf. Mach.
  Learn.}, 2019, pp. 7493--7501.

\bibitem{wang2019clustering}
S.~Wang, T.-H. Chang, Y.~Cui, and J.-S. Pang, ``Clustering by orthogonal nmf
  model and non-convex penalty optimization,'' \emph{arXiv preprint
  arXiv:1906.00570}, 2019.

\bibitem{lee1999learning}
D.~D. Lee and H.~S. Seung, ``Learning the parts of objects by non-negative
  matrix factorization,'' \emph{Nature}, vol. 401, no. 6755, pp. 788--791,
  1999.

\bibitem{li2015accelerated}
H.~Li and Z.~Lin, ``Accelerated proximal gradient methods for nonconvex
  programming,'' in \emph{Adv. Neural. Inf. Process. Syst.}, 2015, pp.
  379--387.

\bibitem{li2017convergence}
Q.~Li, Y.~Zhou, Y.~Liang, and P.~K. Varshney, ``Convergence analysis of
  proximal gradient with momentum for nonconvex optimization,'' in \emph{Int.
  Conf. Mach. Learn.}, 2017, pp. 2111--2119.

\bibitem{xu2017globally}
Y.~Xu and W.~Yin, ``A globally convergent algorithm for nonconvex optimization
  based on block coordinate update,'' \emph{J. Sci. Comput.}, vol.~72, no.~2,
  pp. 700--734, 2017.

\bibitem{guan2012nenmf}
N.~Guan, D.~Tao, Z.~Luo, and B.~Yuan, ``{NeNMF}: An optimal gradient method for
  nonnegative matrix factorization,'' \emph{IEEE Trans. Signal Process.},
  vol.~60, no.~6, pp. 2882--2898, 2012.

\bibitem{xu2013block}
Y.~Xu and W.~Yin, ``A block coordinate descent method for regularized
  multiconvex optimization with applications to nonnegative tensor
  factorization and completion,'' \emph{SIAM J. Imaging Sci.}, vol.~6, no.~3,
  pp. 1758--1789, 2013.

\bibitem{pock2016inertial}
T.~Pock and S.~Sabach, ``Inertial proximal alternating linearized minimization
  {(iPALM)} for nonconvex and nonsmooth problems,'' \emph{SIAM J. Imaging
  Sci.}, vol.~9, no.~4, pp. 1756--1787, 2016.

\bibitem{witten2010framework}
D.~M. Witten and R.~Tibshirani, ``A framework for feature selection in
  clustering,'' \emph{J. Am. Stat. Assoc.}, vol. 105, no. 490, pp. 713--726,
  2010.

\bibitem{chang2018sparse}
X.~Chang, Y.~Wang, R.~Li, and Z.~Xu, ``Sparse k-means with
  $\ell_\infty/\ell_{0}$ penalty for high-dimensional data clustering,''
  \emph{Stat. Sin.}, vol.~28, no.~3, pp. 1265--1284, 2018.

\bibitem{li2013clustering}
Z.~Li, J.~Liu, Y.~Yang, X.~Zhou, and H.~Lu, ``Clustering-guided sparse
  structural learning for unsupervised feature selection,'' \emph{IEEE Trans.
  Knowl. Data Eng.}, vol.~26, no.~9, pp. 2138--2150, 2014.

\bibitem{pollen2014low}
A.~A. Pollen \emph{et~al.}, ``Low-coverage single-cell {mRNA} sequencing
  reveals cellular heterogeneity and activated signaling pathways in developing
  cerebral cortex,'' \emph{Nat. Biotechnol.}, vol.~32, no.~10, p. 1053, 2014.

\bibitem{camp2017multilineage}
J.~G. Camp \emph{et~al.}, ``Multilineage communication regulates human liver
  bud development from pluripotency,'' \emph{Nature}, vol. 546, no. 7659, p.
  533, 2017.

\bibitem{lake2016neuronal}
B.~B. Lake, R.~Ai, G.~E. Kaeser \emph{et~al.}, ``Neuronal subtypes and
  diversity revealed by single-nucleus {RNA} sequencing of the human brain,''
  \emph{Science}, vol. 352, no. 6293, pp. 1586--1590, 2016.

\bibitem{chau2018downregulation}
K.~F. Chau, M.~L. Shannon, R.~M. Fame, E.~Fonseca, H.~Mullan, M.~B. Johnson,
  A.~K. Sendamarai, M.~W. Springel, B.~Laurent, and M.~K. Lehtinen,
  ``Downregulation of ribosome biogenesis during early forebrain development,''
  \emph{Elife}, vol.~7, p. e36998, 2018.

\bibitem{santacatterina2016down}
F.~Santacatterina, L.~S{\'a}nchez-Cenizo \emph{et~al.}, ``Down-regulation of
  oxidative phosphorylation in the liver by expression of the atpase inhibitory
  factor 1 induces a tumor-promoter metabolic state,'' \emph{Oncotarget},
  vol.~7, no.~1, p. 490, 2016.

\bibitem{ma2013switch}
R.~Ma, W.~Zhang, K.~Tang \emph{et~al.}, ``Switch of glycolysis to
  gluconeogenesis by dexamethasone for treatment of hepatocarcinoma,''
  \emph{Nat. Commun.}, vol.~4, no.~1, pp. 1--12, 2013.

\bibitem{liu2019con}
R.~Liu, S.~Cheng, Y.~He, X.~Fan, Z.~Lin, and Z.~Luo, ``On the convergence of
  learning-based iterative methods for nonconvex inverse problems,'' \emph{IEEE
  Trans. Pattern Anal. Mach. Intell.}, vol.~42, no.~12, pp. 3027--3039, 2020.

\bibitem{bao2015dictionary}
C.~Bao, H.~Ji, Y.~Quan, and Z.~Shen, ``Dictionary learning for sparse coding:
  Algorithms and convergence analysis,'' \emph{IEEE Trans. Pattern Anal. Mach.
  Intell.}, vol.~38, no.~7, pp. 1356--1369, 2015.

\end{thebibliography}
}

\end{document}